\newtheorem{Thm}{Theorem}
\newtheorem*{Thm*}{Theorem}
\newtheorem{Lm}{Lemma}
\newtheorem{Df}{Definition}
\newtheorem{Lem}[Thm]{Lemma}
\newtheorem{Prop}[Thm]{Proposition}
\newtheoremstyle{TheoremNum}
    {\topsep}{\topsep}              
    {\itshape}                      
    {}                              
    {\bfseries}                     
    {.}                             
    { }                             
    {\thmname{#1}\thmnote{ \bfseries #3}}
\theoremstyle{TheoremNum}
\newenvironment{customthm}[1]
  {\innercustomthm}
  {\endinnercustomthm}
\newcommand{\R}{\mathbb{R}}
\newcommand{\E}{\mathbb{E}}
\DeclareMathOperator{\argmin}{argmin}
\begin{document}

%

%
\runningauthor{Jia, Dao, Wang, Hubis, Hynes, Guerel, Li, Zhang, Song, Spanos}

\twocolumn[

\aistatstitle{Towards Efficient Data Valuation Based on the Shapley Value}

\aistatsauthor{Ruoxi Jia$^{1*}$, David Dao$^{2*}$, Boxin Wang$^3$, Frances Ann Hubis$^{2}$, Nick Hynes$^{1}$, \\\textbf{Nezihe Merve Gurel$^{2}$, Bo Li$^{4}$, Ce Zhang$^{2}$, Dawn Song$^{1}$, Costas Spanos$^{1}$}}

\aistatsaddress{$^1$University of California at Berkeley, $^2$ETH, Zurich\\$^3$Zhejiang University, $^4$University of Illinois at Urbana-Champaign} 
]

\begin{abstract}
{\em ``How much is my data worth?''} is an increasingly common question posed by organizations and individuals alike. An answer to this question could allow, for instance, fairly distributing profits among multiple data contributors and determining prospective compensation when data breaches happen. In this paper, we study the problem of \emph{data valuation} by utilizing the Shapley value, a popular notion of value which originated in cooperative game theory. The Shapley value defines a unique payoff scheme that satisfies many desiderata for the notion of data value. However, the Shapley value often requires \emph{exponential} time to compute.
To meet this challenge, we propose a repertoire of efficient algorithms for approximating the Shapley value. We also demonstrate the value of each training instance for various benchmark datasets.
\end{abstract}

\section{Introduction}
Data analytics using machine learning (ML) is an increasingly common practice in modern science and business. The data for building an ML model are often provided by multiple entities. For instance, Internet enterprises analyze various users' data to improve product design, customer retention, and initiatives that help them earn revenue. Furthermore, the quality of the data from different entities may vary widely. Therefore, a key question often asked by stakeholders of a ML system is how to fairly allocate the revenue generated by a ML model to the data contributors.

\begin{figure}[t]
\centering
\includegraphics[width=1\columnwidth]{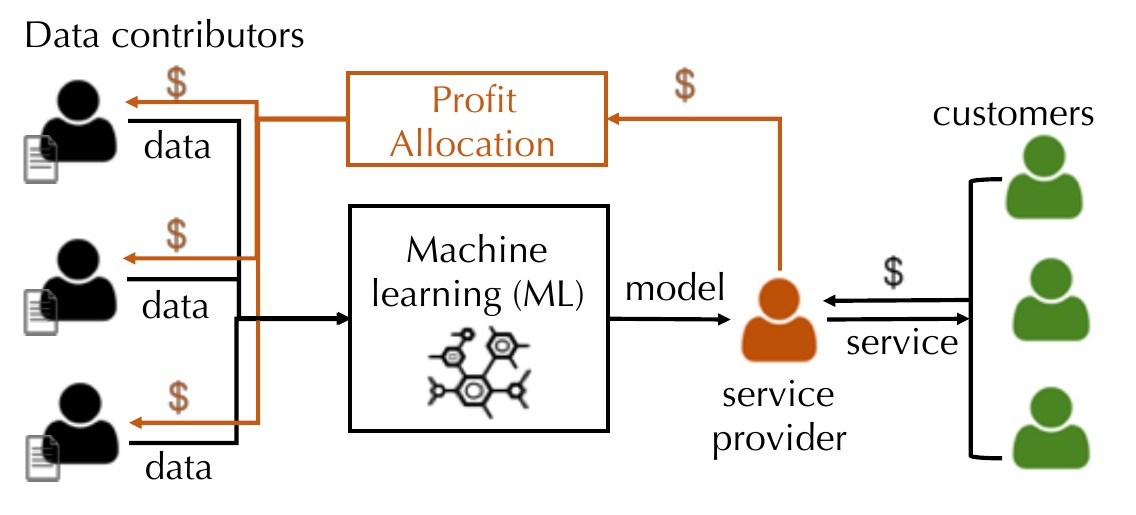}
\caption{Overview of the data valuation problem.}
\label{fig:overview}
\end{figure}

This question is also motivated by a system we are building together with one of the largest hospital in the US. In the system, patients submit part of their medical records onto a ``data market,'' and analysts pay a certain amount of money to train a ML model on patients' data. One of the challenges in such data markets is how to distribute the payment from analysts back to the patients.

A natural way of tackling the data valuation problem is to adopt a game-theoretic viewpoint, where each data contributor is modeled as a player in a coalitional game and the usefulness of data from any subset of contributors is characterized via a utility function. The Shapley value (SV) is a classic method in cooperative game theory to distribute the total gains generated by the coalition of all players, and has been applied to problems in various domains, ranging from economics~\cite{gul1989bargaining}, counter-terrorism~\cite{michalak2013computational,lindelauf2013cooperative}, environmental science~\cite{petrosjan2003time}, to ML~\cite{cohen2005feature}. The reason for its broad adoption is that the SV defines a unique profit allocation scheme that satisfies a set of properties with appealing real-world interpretations, such as fairness, rationality, and decentralizability.

Despite the desirable properties of the SV, computing the SV is known to be expensive; the number of  utility function evaluations required by the exact SV calculation grows exponentially in the number of players. This poses a radical challenge to using the SV in the context of data valuation---how to calculate, or approximate the SV over millions or even billions of data points, a scale that is rare in previous applications of the SV, but not uncommon for real-world data valuation tasks. Even worse, for ML tasks, evaluating the utility function itself (e.g., testing accuracy) is already computationally expensive, as it requires training a model. Due to the computational challenge, the application of the SV to data valuation has thus far been limited to stylized examples, in which the underlying utility function of the game is simple and the resulting SV can be represented as a closed-form expression~\cite{kleinberg2001value,chessa2017cooperative}. The state-of-the-art method to estimate the SV for a black-box utility function is based on Monte Carlo simulations~\cite{maleki2013bounding}, which still requires evaluating ML models for $\mathcal{O}(N^2\log N)$ many times in order to compute the SV of $N$ data points and is thus clearly impracticable.
In this paper, we attempt to answer the question of whether it is possible to efficiently estimate the SV while achieving the same performance guarantee as the state-of-the-art method. 

\begin{table*}[t!]
\caption{Summary of Technical Results. $N$ is the number of data points.}
\label{fig:summary_of_results}
\resizebox{\textwidth}{!}{
\begin{tabular}{cccclc}
\hline
\multicolumn{1}{l}{\multirow{2}{*}{}}                                                     & \multirow{2}{*}{\textbf{Assumptions}} & \multirow{2}{*}{\textbf{Techniques}}                                        & \multicolumn{2}{c}{\textbf{Complexity}}                                                                                                                                                                                                           & \multirow{2}{*}{\textbf{Approximation}} \\ \cline{4-5}
\multicolumn{1}{l}{}                                                                      &                                       &                                                                             & \textbf{incrementally trainable models}                                                                                                      & \multicolumn{1}{c}{\textbf{otherwise}}                                                             &                                         \\ \hline
\textbf{Existing}                                                                         & Bounded utility                       & Permutation sampling                                                        & \multicolumn{1}{l}{\begin{tabular}[c]{@{}l@{}}$\mathcal{O}(N\log N)$ model training \\ and $\mathcal{O}(N^2\log N)$ eval\end{tabular}}     & \begin{tabular}[c]{@{}l@{}}$\mathcal{O}(N^2\log N)$ \\ model training and eval\end{tabular}       & $(\epsilon, \delta)$                    \\ \hline
\multirow{2}{*}{\textbf{\begin{tabular}[c]{@{}c@{}}Application\\ -agnostic\end{tabular}}} & Bounded utility                       & Group testing                                                               & \multicolumn{1}{l}{\begin{tabular}[c]{@{}l@{}}$\mathcal{O}(N(\log N)^2)$ model training \\ and eval\end{tabular}}                      & \begin{tabular}[c]{@{}l@{}}$\mathcal{O}(N(\log N)^2)$\\ model training and eval\end{tabular} & $(\epsilon, \delta)$                    \\ \cline{2-6} 
                                                                                          & \begin{tabular}[c]{@{}c@{}}Monotone utility \& \\ sparse value\end{tabular}                          & \begin{tabular}[c]{@{}c@{}}Compressive \\ permutation sampling\end{tabular} & \multicolumn{1}{l}{\begin{tabular}[c]{@{}l@{}}$\mathcal{O}(\log\log N)$ model training\\ and $\mathcal{O}(N\log\log N)$ eval\end{tabular}} & \begin{tabular}[c]{@{}l@{}}$\mathcal{O}(N\log\log N)$ \\ model training and eval\end{tabular}     & $(\epsilon, \delta)$                    \\ \hline
\multirow{2}{*}{\textbf{ML-specific}}                                                     & Stable learning                       & Uniform division                                                            & \multicolumn{2}{c}{$\mathcal{O}(1)$ computation}                                                                                                                                                                                                  & $(\epsilon, 0)$                         \\ \cline{2-6} 
                                                                                          & Smooth utility                        & Influence function                                                          & \multicolumn{2}{c}{$\mathcal{O}(N)$ optimization routines}                                                                                                                                                                                        & Heuristic                               \\ \hline
\end{tabular}
}
\end{table*}

{\bf Theoretical Contribution} We first
study this question from a theoretical
perspective. We show that, to approximate
the SV of $N$ data points with provable error guarantees,
it is possible to
design an algorithm with
$\mathcal{O}(N(\log N)^2)$ model evaluations\footnote{See the technique note by Wang and Jia \cite{wang2023note} for an improved version of this algorithm.}.
We achieve this by enabling proper information sharing between different model evaluations. Moreover, if it is reasonable to assume that the utility function is monotone and the SV is ``sparse'' in the sense that only few data points have significant values, then we are able to further reduce the number of model training to $\mathcal{O}(\log\log N)$,
when the model can be incrementally
maintained. It is worth noting that these two algorithms are agnostic to the context wherein the SV is computed; hence, they are also useful for the applications beyond data valuation. 

{\bf Practical Contribution} Despite the improvements from a theoretical perspective, retraining models for multiple times may still be unaffordable for large datasets and ML models. We then 
introduce two practical SV estimation algorithms specific to ML tasks
by introducing various assumptions
on the utility function. We show that if a learning algorithm is uniformly stable~\cite{bousquet2002stability}, then uniform value division produces a fairly good approximation to the true SV. In addition, for an ML model with smooth loss functions, we propose to use the influence function~\cite{koh2017understanding} to accelerate the data valuation process. However, the efficiency does not come for free. The first algorithm relies on the stability of a learning algorithm, which is difficult to prove for complex ML models, such as deep neural networks. The compromise that we have to make in the second algorithm is that the resulting SV estimates no longer have provable guarantees on the approximation error. Filling the gap between theoretical soundness and practicality is important future work. 

Table~\ref{fig:summary_of_results} summarizes the contributions of this paper. In the rest of the paper, we will elaborate on the idea and analysis of these algorithms, and further use them to compute the data values for various benchmark datasets.

\section{Related Work}

Originated from game theory, the SV, in its
most general form, can be $\mathsf{\#P}$-complete to compute~\cite{deng1994complexity}. Efficiently estimating SV has been studied extensively for decades. For bounded utility functions, Maleki et al.~\cite{maleki2013bounding} described a sampling-based approach that requires $\mathcal{O}(N\log N)$ samples to achieve a desired approximation error in $l_\infty$ norm and $\mathcal{O}(N^2\log N)$ in $l_2$ norm. Bachrach et al.~\cite{bachrach2008approximating} also leveraged a similar approach but focused on the case where the utility function has binary outputs. By taking into account special properties of the utility function, one can derive more efficient approximation algorithms. For instance, Fatima et al.~\cite{fatima2008linear} proposed a probabilistic approximation algorithm with $\mathcal{O}(N)$ complexity for weighted voting games. The game-theoretic analysis of the value of personal data has been explored in~\cite{chessa2017cooperative,kleinberg2001value}, which proposed a fair compensation mechanism based on the SV like ours. They derived the SV under simple data utility models abstracted from network games or recommendation systems, while our work focuses on more complex utility functions derived from ML applications. In our case, the SV no longer has closed-form expressions. We develop novel and efficient approximation algorithms to overcome this hurdle.

Using the SV in the
context of ML is not new. For instance, the SV has been applied to feature selection~\cite{cohen2005feature,sun2012using,mokdad2015determination,sasikala2015novel,lundberg2017unified}.
While their contributions have inspired this paper, many assumptions made for
feature ``valuation'' do not hold for data valuation.
As we will see, by studying the SV
tailored to data valuation, we can develop novel
algorithms that are more efficient than the previous approaches~\cite{maleki2013bounding}.

Despite not being used for data valuation, ranking the importance of training data points has been used for understanding model behaviors, detecting dataset errors, etc. Existing methods include using the influence function~\cite{koh2017understanding} for smooth parametric models and a variant~\cite{sharchilev2018finding} for non-parametric ones. Ogawa et al.~\cite{ogawa2013safe} proposed rules to identify and remove the least influential data in order to reduce the computation cost when training support vector machines (SVM). One can also construct coresets---weighted data subsets---such that models trained on these coresets are provably competitive with models trained on the full dataset~\cite{dasgupta2009sampling}.
These approaches could potentially be used 
for valuing data; however, it is not clear
whether they satisfy the properties desired by data valuation, such as fairness. 
We leave it for future work to understand these distinct approaches for data valuation.

\section{Problem Formulation}
\label{sec:problem_formulation}
Consider a dataset $D=\{ z_i\}_{i=1}^N$ containing data from $N$ users. Let $U(S)$ be the utility function, representing the value calculated by the additive aggregation of $\{z_i\}_{i\in S}$ and $S\subseteq I=\{1,\cdots,N\}$. Without loss of generality, we assume throughout that $U(\emptyset)= 0$. Our goal is to partition $U_{\text{tot}}\triangleq U(I)$, the utility of the entire dataset, to the individual users; more formally, we want to find a function that assigns to user $i$ a number $s(U,i)$ for a given utility function $U$. We suppress the dependency on $U$ when the utility is self-evident and use $s_i$ to represent the value allocated to user $i$. 

The SV~\cite{shapley1953value} is a classic concept in cooperative game theory to attribute the total gains generated by the coalition of all players. Given a utility function $U(\cdot)$, the SV for user $i$ is defined as the average marginal contribution of $z_i$ to all possible subsets of $D=\{z_i\}_{i\in I}$ formed by other users:
\begin{align}
\label{eqn:shapley_definition_no_order}
s_i = \sum_{S\subseteq I\setminus\{i\}} \frac{1}{N{N-1 \choose |S|}}
\big[U(S\cup \{i\})-U(S)\big]
\end{align}
The formula in (\ref{eqn:shapley_definition_no_order}) can also be stated in the equivalent form: 
\begin{align}
\label{eqn:shapley_definition_order}
    s_i = 
\frac{1}{N!}\sum_{\pi \in \Pi(D)}\big[ U(P_i^\pi\cup \{i\}) - U(P_i^\pi)\big]
\end{align}
where $\pi \in \Pi(D)$ is a permutation of users and $P_i^\pi$ is the set of users which precede user $i$ in $\pi$. 
Intuitively, imagine all users' data are to be collected in a random order, and that every user $i$ receives his data's marginal contribution that would bring to those whose data are already collected. If we average these contributions over all the possible orders of users, we obtain $s_i$. The importance of the SV stems from the fact that it is the \emph{unique} value division scheme that satisfies the following desirable properties.

     {\bf 1. Group Rationality}: The value of the entire dataset is completely distributed among all users, i.e., $U(I) = \sum_{i\in I} s_i$.
    
     {\bf 2. Fairness}: (1) Two users who are identical with respect to what they contribute to a dataset's utility should have the same value. That is, if user $i$ and $j$ are equivalent in the sense that $U(S\cup \{i\}) = U(S\cup \{j\}),\forall S\subseteq I\setminus \{i,j\}$, then $s_i=s_j$. (2) Users with zero marginal contributions to all subsets of the dataset receive zero payoff, i.e., $s_i=0$ if $U(S\cup \{i\})=0$ for all $S\subseteq I\setminus\{i\}$.

     {\bf 3. Additivity}: The values under multiple utilities sum up to the value under a utility that is the sum of all these utilities: $s(U,i) + s(V,i) = s(U+V,i)$ for $i\in I$.

The \emph{group rationality} property states that any rational group of users would expect to distribute the full yield of their coalition. The \emph{fairness} property requires that the names of the users play no role in determining the value, which should be sensitive only to how the utility function responds to the presence of a user's data. The \emph{additivity} property facilitates efficient value calculation when data is used for multiple applications, each of which is associated with a specific utility function. With additivity, one can decompose a given utility function into an arbitrary sum of utility functions and compute utility shares separately, resulting in transparency and decentralizability. The fact that the SV uniquely possesses these properties, combined with its flexibility to support different utility functions, leads us to employ the SV to attribute the total gains generated from a dataset to each user.

\section{Efficient SV Estimation}
The challenge in adopting the SV lies in its computational cost. Evaluating the exact SV using Eq.~(\ref{eqn:shapley_definition_no_order}) involves computing the marginal utility of every user to every coalition, which is $\mathcal{O}(2^N)$. Even worse, in many ML tasks, evaluating utility \textit{per se} (e.g., testing accuracy) is computationally expensive as it requires training an ML model. 
In this section, we present various efficient algorithms for approximating the SV. We say that $\hat{s}\in \R^N$ is a $(\epsilon,\delta)$-approximation to the true SV $s=[s_1,\cdots,s_N]^T\in \R^N$ with respect to $l_p$-norm if $P_{\hat{s}}[||\hat{s}_i-s_i||_p\leq \epsilon]\geq 1-\delta$. Throughout this paper, we will measure the approximation error in terms of $l_2$ norm.

\subsection{Baseline: Permutation Sampling}

We start by describing a baseline algorithm~\cite{maleki2015addressing} that approximates the SV for any bounded utility functions with provable guarantees. Let $\pi$ be a random permutation of $I$ and each permutation has a probability of $1/N!$. Consider the random variable $\phi_i = U(P_i^{\pi}\cup\{i\}) - U(P_i^{\pi})$. According to (\ref{eqn:shapley_definition_order}), $s_i=\E[\phi_i]$. Thus, we can estimate $s_i$ by the sample mean. An application of Hoeffding's bound indicates that the number of permutations needed to achieve an $(\epsilon,\delta)$-approximation is $m_{\text{perm}}=(2r^2N/\epsilon^2)\log (2N/\delta)$, where $r$ is the range of the utility function. For each permutation, the utility function is evaluated $N$ times in order to compute the marginal contribution for all $N$ users; therefore, the number of utility evaluations involved in the baseline approach is $m_{\text{eval}}=Nm_{\text{perm}}= \mathcal{O}(N^2\log N)$.

Note that for an ML task, we can write the utility function $U(S)=U_m(A(S))$, where $A(\cdot)$ represents a learning algorithm that maps a dataset $S$ onto a model and $U_m(\cdot)$ is some measure of model performance, such as test accuracy. Typically, a substantial part of computational costs associated with the utility evaluation lies in $A(\cdot)$. Hence, it is useful to examine the efficiency of an approximation algorithm in terms of the number of model training required. In general, one utility evaluation would need to re-train a model. Particularly, when $A(\cdot)$ is incrementally trainable, one pass over the entire training set allows us to evaluate $\phi_i$ for all $i=1,\cdots,N$. Hence, in this case, the number of model training needed to achieve an $(\epsilon,\delta)$-approximation is the same as $m_\text{perm}=\mathcal{O}(N\log N)$. 

\subsection{Group Testing-Based Approach}
We now describe an algorithm that makes the same assumption of bounded utility as the baseline algorithm, but requires significantly fewer utility evaluations than the baseline.

Our proposed approximation algorithm is inspired by previous work applying the group testing theory to feature selection~\cite{zhou2014parallel}. Recall that group testing is a combinatorial search paradigm~\cite{du2000combinatorial}, in which one wants to determine whether each item in a set is ``good'' or ``defective'' by performing a sequence of tests. The result of a test may be positive, indicating that at least one of the items of that subset is defective, or negative, indicating that all items in that subset are good. Each test is performed on a pool of different items and the number of tests can be made significantly smaller than the number of items by smartly distributing items into pools. Hence, the group testing is particularly useful when testing an individual item's quality is expensive. Analogously, we can think of SV calculation as a group testing problem with continuous quality measure. Each user's data is an ``item'' and the data utility corresponds to the item's quality. Each ``test'' in our scenario corresponds to evaluating the utility of a subset of
users and is expensive. Drawing on the idea of group testing, we hope to recover the utility of all user subsets from a small amount of customized tests.

Let $T$ be the total number of tests. At test $t$, a random set of users is drawn from $I$ and we evaluate the utility of the selected set of users. If we model the appearance of user $i$ and $j$'s data in a test as Boolean random variables $\beta_i$ and $\beta_j$, respectively, then the difference between the utility of user $i$ and that of user $j$ is
\begin{align}
\label{eqn:expected_utility_diff}
(\beta_i-\beta_j)U(\beta_1,\cdots,\beta_N)
\end{align}
where $U(\beta_1,\cdots,\beta_N)$ is the utility evaluated on the users with the Boolean appearance random variable equal to $1$. 

Using the definition of the SV, one can derive the following formula of the SV difference between any pair of users.
\begin{Lem}
\label{lm:shapley_diff}
For any $i,j\in I$, the difference in SVs between $i$ and $j$ is 
\begin{align}
\label{eqn:shapley_diff}
    s_i-s_j = \frac{1}{N-1} \!\! \sum_{S\subseteq I\setminus\{i,j\}} \!\!\!\!\frac{U(S\cup\{i\}) - U(S\cup \{j\})}{\binom{N-2}{|S|}}
\end{align}
\end{Lem}
Due to the space limitation, we omit all the proofs of the paper to our supplemental materials. The key idea of the proposed algorithm is to smartly design the sampling distribution of $\beta_1,\cdots,\beta_N$ such that the expectation of (\ref{eqn:expected_utility_diff}) mirrors the Shapley difference in (\ref{eqn:shapley_diff}). This will enable us to calculate the Shapely differences from the test results with a high-probability error bound. The following Lemma states that if we can estimate the Shapley differences between all data pairs up to $(\epsilon/\sqrt{N},\delta/N)$, then we will be able to recover the SV with the approximation error $(\epsilon,\delta)$.

\begin{Lem}
\label{lm:feasibility}
Suppose that $C_{ij}$ is an $(\epsilon/(2\sqrt{N}),\delta/(N(N-1)))$-approximation to $s_i-s_j$. Then, any solutions to the feasibility problem 
    \begin{align}
    \label{eqn:feasible_1}
        &\sum_{i=1}^N\hat{s}_i = U_\text{tot}\\
        \label{eqn:feasible_2}
        &|(\hat{s}_i-\hat{s}_j)-C_{i,j}|\leq \epsilon/(2\sqrt{N}) \quad \forall i,j\in \{1,\ldots,N\}
    \end{align}
is an $(\epsilon,\delta)$-approximation to $s$ with respect to $l_2$-norm.
\end{Lem}

Algorithm~\ref{alg:gt} presents the pseudo-code of the group testing-based algorithm, which first estimates the Shapley differences and then derives the SV from the Shapley differences by solving a feasibility problem.

\begin{algorithm}[ht]
\SetAlgoLined
\SetKwInOut{Input}{input}
\SetKwInOut{Output}{output}
\Input{Training set - $D = \{(x_i,y_i)\}_{i=1}^N$, utility function $U(\cdot)$, the number of tests - $T$}
\Output{The estimated SV of each training point - $\hat{s}\in \R^N$}

$Z \leftarrow 2\sum _{k=1}^{N-1} \frac{1}{k}$\;

$q(k)\leftarrow\frac{1}{Z} (\frac{1}{k} + \frac{1}{N-k})$ for $k=1,\cdots,N-1$\;

Initialize $\beta_{ti} \leftarrow 0$, $t= 1, ..., T, i = 1,..., N $\;
\For{$t = 1$ to $T$}{
Draw $k_t \sim q(k)$\;
         Uniformly sample a length-$k_t$ sequence $S$ from $\{1,\cdots,N\}$ \;
        
        $\beta_{ti} \leftarrow 1$ for all $i\in S$\;
    $u_t \leftarrow U(S)$\;
}
$\Delta U_{ij} \leftarrow \frac{Z}{T} \sum_{t=1}^T u_t (\beta_{ti}- \beta_{tj})$ for $i = 1,..,N$, $j = 1,...,N$ and $j\geq i$  \;

Find $\hat{s}$ by solving the feasibility problem $\sum_{i=1}^N\hat{s}_i = U(I), |(\hat{s}_i-\hat{s}_j)-\Delta U_{i,j}|\leq \epsilon/(2\sqrt{N}), \forall i,j\in \{1,\cdots,N\}$\;

 \caption{Group Testing Based SV Estimation.}
 \label{alg:gt}
\end{algorithm}

The following theorem provides a lower bound on the number of tests $T$ needed to achieve an
$(\epsilon, \delta)$-approximation.

\begin{Thm}
\label{thm:gt_T}
Algorithm~\ref{alg:gt} returns an $(\epsilon,\delta)$-approximation to the SV with respect to $l_2$-norm if the number of tests $T$ satisfies $T\geq 8 \log \frac{N(N-1)}{2\delta} /\big((1-q_{tot}^2)h\big(\frac{\epsilon}{Zr\sqrt{N}(1-q_{tot}^2)}\big)\big) $, where $q_{tot}=\frac{N-2}{N}q(1) + \sum_{k=2}^{N-1} q(k)[1+\frac{2k(k-N)}{N(N-1)}]$, $h(u) =(1+u)\log(1+u) - u$,
$Z=2\sum _{k=1}^{N-1} \frac{1}{k}$, and $r$ is the range of the utility function.\footnote{Wang and Jia \cite{wang2023note} gives an improved version of this result.}
\end{Thm}

Using the Taylor expansion of $h$, it can be proved that when $N$ is large, $T$ is $\mathcal{O}(N(\log N)^2)$.
Since only one utility evaluation is required for a single test, the number of utility evaluations is at most $\mathcal{O}(N(\log N)^2)$.
On the other hand, in the baseline approach,
the number of utility evaluations is $\mathcal{O}(N^2\log N)$. Hence, the group testing requires significantly fewer model evaluations than the baseline.

\subsection{Exploiting the Sparsity of Values}
We now present an algorithm inspired by our empirical observations of the SV for large datasets. This algorithm can produce an $(\epsilon,\delta)$-approximation to the SV with only $\mathcal{O}(N\log(N)\log(\log(N)))$ utility evaluations.

Figure~\ref{fig:shapley_dist} illustrates the distribution of the SV of the MNIST dataset, from which we observed that the SV is ``approximately sparse''---most of values are concentrated around its mean and only a few data points have significant values. In the literature, the ``approximate sparsity'' of a vector $s$ is characterized by a small error of its best $K$-term approximation:
\begin{align}
    \sigma_K(s) = \inf\{\|s-z\|_1, \text{ z is $K$-sparse}\}
\end{align}
This observation opens up a vast collection of tools from compressive sensing for the purpose of calculating the SV.

\begin{figure}[t]
\centering
\includegraphics[width=0.9\columnwidth]{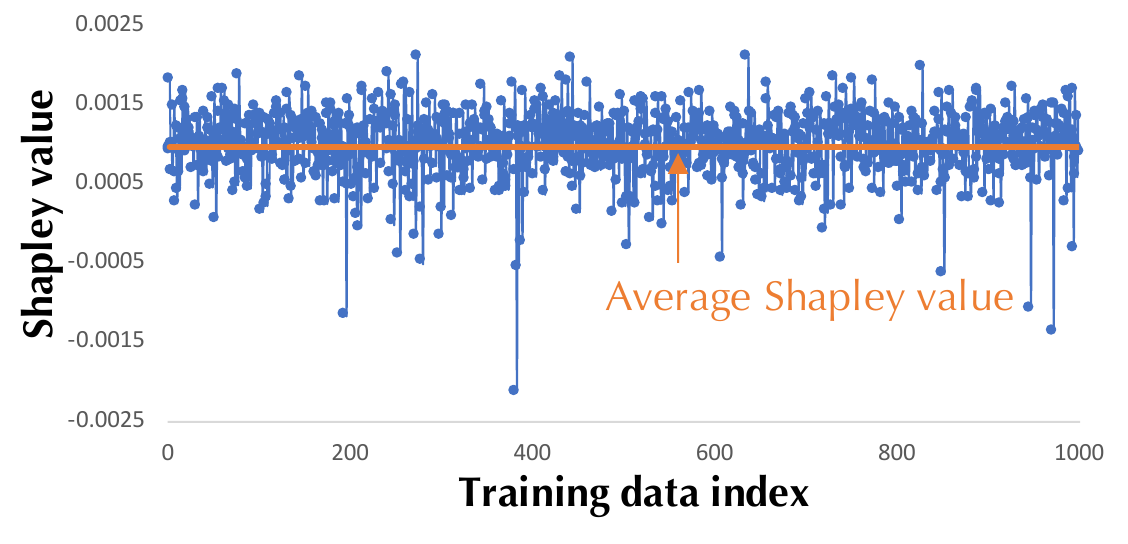}
\caption{The distribution of the SV of a size-$1000$ training set randomly sampled from MNIST. $\sigma_{367}(s)/(\sum_{i=1}^Ns_i)=0.5$. The utility function is the test accuracy. }
\label{fig:shapley_dist}
\end{figure}

Compressive sensing studies the problem of recovering a sparse signal $s$ with far fewer measurements $y=As$ than the length of the signal. A sufficient condition for recovery is that the measurement matrix $A\in \R^{M\times N}$ satisfies a key property, the \emph{Restricted Isometry Property (RIP)}. In order to ensure that $A$ satisfies this property, we simply choose $A$ to be a random Bernoulli matrix. The results in random matrix theory imply that $A$ satisfies RIP with high probability. Define the $k$th restricted isometry constant $\delta_k$ for a matrix $A$ as 
\begin{align}
    \delta_k(A) = \min&\{\delta:\forall s, 
   \|s\|_0\leq k,  \nonumber\\
   &(1-\delta)\|s\|_2^2 \leq \|As\|_2^2 \leq (1+\delta)\|s\|_2^2 
\end{align}

It has been shown in~\cite{rauhut2010compressive} that every $k$-sparse vector $s$ can be recovered by solving a convex optimization problem 
\begin{align}
    \min_{s\in \R^N }\|s\|_1, \quad \text{s.t. } As = y
\end{align}
if $\delta_{2s}(A)<1/3$. This result can also be generalized to noisy measurements~\cite{candes2006stable}. Drawing on the ideas of compressed sensing, we present Algorithm~\ref{alg:compressive_perm}, termed compressive permutation sampling.

\begin{algorithm}[ht]
\SetAlgoLined
\SetKwInOut{Input}{input}
\SetKwInOut{Output}{output}
\Input{Training set - $D = \{(x_i,y_i)\}_{i=1}^N$, utility function $U(\cdot)$, the number of measurements - $M$, the number of permutations - $T$}
\Output{The SV of each training point - $\hat{s}\in \R^N$}
 Sample a Bernoulli matrix $A$, where $A_{m,i}\in \{-1/\sqrt{M},1/\sqrt{M}\}$ with equal probability\;

    \For{$t\gets 1$ \KwTo $T$}{
        $\pi_t \leftarrow \text{GenerateUniformRandomPermutation}(D)$\;
        
        $\phi^t_i \leftarrow  U(P_i^{\pi_t}\cup \{i\}) -  U(P_i^{\pi_t})$ for $i=1,\ldots,N$\;
        
         \For{$m\gets 1$ \KwTo $M$}{
        $\hat{y}_{m,t} \leftarrow \sum_{i=1}^N A_{m,i}\phi^t_i$\;
        }
    }
    $\bar{y}_m = \frac{1}{T}\sum_{t=1}^T \hat{y}_{m,t}$ for $m=1,\ldots,M$\;
    $\bar{s} = U(D)/N$\;
    $\Delta s^*\leftarrow  \argmin_{\Delta s\in\R^N} \|\Delta s\|_1, \text{s.t. } \|A(\bar{s}+\Delta s)-\bar{y}\|_2\leq \epsilon$\;
    $\hat{s} = \bar{s} + \Delta s^*$\;
 \caption{Compressive Permutation Sampling.}
 \label{alg:compressive_perm}
\end{algorithm}

\begin{Thm}
\label{thm:compress_perm}
Suppose that $U(\cdot)$ is monotone. There exists some constant $C'$ such that if $M\geq C'(K\log(N/(2K))+\log(2/\delta))$ and $T\geq \frac{2r^2}{\epsilon^2} \log \frac{4M}{\delta}$, except for an event of probability no more than $\delta$, the output of Algorithm~\ref{alg:compressive_perm} obeys
\begin{align}
        \|\hat{s}-s\|_2 \leq C_{1,K}\epsilon + C_{2,K} \frac{\sigma_K(s)}{\sqrt{K}}
\end{align}
for some constants $C_{1,K}$ and $C_{2,K}$.
\end{Thm}

Therefore, the number of utility evaluations (and model training) required for achieving the approximation error guarantee in Theorem~\ref{thm:compress_perm} is $NT=\mathcal{O}(N\log(\log(N)))$. Particularly, when the utility function is defined with respect to an incrementally trainable model, only $\log\log(N)$ full model training is needed for achieving the error guarantee. 


\subsection{Stable Learning Algorithms}
\label{sec:stability}
A learning algorithm is {\em stable} if the model learned by the algorithm is insensitive to the removal of an arbitrary point in the training dataset~\cite{bousquet2002stability}. More specifically, an algorithm $G$ has uniform stability $\gamma$ with respect to the loss function $l$ if $\|l(G(S),\cdot) - l(G(S^{\setminus i}),\cdot)\|_\infty \leq \gamma$ for all $i\in \{1,\cdots,|S|\}$, where $S$ denotes the training set and $S^{\setminus i}$ denotes the one by removing $i$th element of $S$. Indeed, a broad variety of learning algorithms are stable, including all learning algorithms with Tikhonov regularization. Stable learning algorithms are appealing as they enjoy provable generalization error bounds~\cite{bousquet2002stability}. Assume that the model is trained via a stable learning algorithm and training data's utility is measured in terms of the testing loss. Due to the inherent insensitivity of a stable learning algorithm to the training data, we expect that the SV of each training point is similar to one another. The following theorem confirms our intuition and provides an upper bound on the SV difference between any pair of training data points.

\begin{Thm}
\label{thm:stable}
For a learning algorithm $A(\cdot)$ with uniform stability $\beta = \frac{C_\text{stab}}{|S|}$, where $|S|$ is the size of the training set
and $C_\text{stab}$ is some constant. Let the utility of $D$ be $U(D) = M - L_\text{test}(A(D),D_\text{test})$, where $L_\text{test}(A(D),D_\text{test}) = \frac{1}{N}\sum_{i=1}^N l(A(D),z_{\text{test},i})$ and $0\leq l(\cdot,\cdot)\leq M$. Then,
$s_i-s_j\leq 2C_\text{stab}\frac{1+\log(N-1)}{N-1}$ and the Shapley difference vanishes as $N\rightarrow \infty$. 
\end{Thm}

By Lemma~\ref{lm:feasibility}, if $2C_\text{stab}\frac{1+\log(N-1)}{N-1}$ is less than $\epsilon/(2\sqrt{N})$, uniformly assigning $\frac{U_{\text{tot}}}{N}$ to each data contributor provides an $(\epsilon,0)$-approximation to the SV.

\subsection{Heuristic Based on Influence Functions}
\label{sec:influence}
Computing the SV involves evaluating the change in utility of all possible sets of data points after adding one more point. A plain way to evaluate the difference requires training a large number of models on different subsets of data. Koh et al.~\cite{koh2017understanding} show that influence functions can be used as an efficient approximation of parameter changes after adding or removing one point. Therefore, the need for re-training models is circumvented. Assume that model parameters are obtained by solving an empirical risk minimization problem $\hat{\theta}^m= \argmin_{\theta} \frac{1}{m}\sum_{i=1}^m l(z_i,\theta)$. Applying the result in~\cite{koh2017understanding}, we can approximate the parameters learned after adding $z$ by using the relation $\hat{\theta}^{m+1}_{z} =  \hat{\theta}^m -\frac{1}{m} H_{\hat{\theta}^m}^{-1} \nabla_\theta L(z,\hat{\theta}^m)$
where $H_{\hat{\theta}^m}=\frac{1}{m}\sum_{i=1}^m \nabla_\theta^2 L(z_i,\hat{\theta}^m)$ is the Hessian. The parameter change after removing $z$ can be approximated similarly, except for replacing the $-$ by $+$ in the above formula.
The efficiency of the baseline permutation sampling method can be significantly improved by combining it with influence functions. Moreover, we can employ a more sophisticated sampling scheme to reduce the variance of the result. Indeed, we can re-write the SV as $s_i = \frac{1}{N}\sum_{k=1}^N \E[X_i^k]$, where $X_i^k = U(S\cup \{i\})-U(S)$ is the marginal contribution of user $i$ to a size-$k$ subset that is randomly selected
with probability $1/{N-1\choose k}$. This suggests that stratified sampling can be used to approximate the SV, which customizes the number of samples for estimating each expectation term according to the variance of $X_i^k$.

{\bf Largest-$S$ Approximation.\quad} One practical
heuristic of using influence functions is to consider a single subset $S$ for computing $s_i$, namely, $I \setminus \{i\}$.   
With this heuristic, we can simply take a trained
model on the whole dataset, and calculate the influence
function for each data point. For logistic regression models, the first and second derivations enjoy closed-form expressions and the change in parameters after removing one point $z=(x,y)$ can be approximated by $-\big(\sum_{i=1}^N \sigma(x_i^T\hat{\theta}^{N})\sigma(-x_i^T\hat{\theta}^{N})x_ix_i^T\big)^{-1} \sigma(-y x_i^T\hat{\theta}^{N})yx$ where
$\sigma(u) = 1/(1+\exp(-u))$ and $y\in \{-1,1\}$. The fact that largest-$S$ influence
only considers a single subset makes it impossible to satisfy the {\em group rationality} and {\em additivity} properties simultaneously.
\begin{Thm}
\label{thm:inf_violation}
Consider the value attribution scheme that assigns the value $\hat{s}(U,i) = C_U[U(S\cup \{i\})-U(S)]$ to user $i$ where $|S|=N-1$ and $C_U$ is a constant such that $\sum_{i=1}^N \hat{s}(U,i) = U(I)$. Consider two utility functions $U(\cdot)$ and $V(\cdot)$. Then, $\hat{s}(U+V,i)\neq\hat{s}(U,i) + \hat{s}(V,i)$ unless $V(I)[\sum_{i=1}^N U(S\cup\{i\})-U(S)] = U(I) [\sum_{i=1}^N V(S\cup\{i\})-V(S)]$.
\end{Thm}


\section{Experimental Results}

\paragraph{Comparing Approximation Accuracy.\quad} We first compare the proposed approximation methods that only require mild assumptions on the ML models (e.g., bounded or differentiable utility), including (a) the permutation sampling baseline, (b) the group testing-based method, (c) using influence functions to approximate all marginal contributions, and (d) approximating the SV with only the influence function to the largest subset. The last two methods are hereinafter referred to as \emph{all-$S$ influence} and \emph{largest-$S$ influence}, respectively. We use a small-scale dataset, \texttt{iris}, 
and use (a) to estimate the true
SV for a regularized logistic regression up to $\epsilon=1/N$. Figure~\ref{fig:adv}(d) shows
that the approximations produced by (a)-(c) are closest to each other. The result of the largest-$S$ influence is correlated with that of the other techniques, although it cannot recover the true SV.

\vspace{-0.2cm}
\paragraph{Runtime comparison.} We implement the SV calculation techniques on a machine with 16 cores (Intel Xeon CPU E5-2620 v4 @ 2.10GHz) and compare the runtime of different techniques on a two-class \texttt{dog-vs-fish} dataset~\cite{koh2017understanding} of size $900$ constructed from the ImageNet dataset. To evaluate the runtime for training sizes above $900$, we concatenate duplicate copies of the dog-vs-fish dataset. For each training data point, we first pre-compute the 2048-dimensional inception features and then train a logistic regression using the stochastic gradient descent for $150$ epochs. The utility function is the negative testing loss of the logistic regression model. For the largest-$S$ influence and the all-$S$ influence, we use the method in~\cite{koh2017understanding} to compute the influence function. The runtime of different techniques in logarithmic scale is displayed in Figure~\ref{fig:comp} (b). We can see that the group testing-based method outperforms the permutation sampling baseline by several orders of magnitude for a large number of data points. By exploiting influence function heuristics and the stratified sampling trick in Section~\ref{sec:influence}, the computational costs can be further reduced. Due to the fact that the largest-$S$ influence heuristic only focuses on the marginal contribution of each training data point to a single subset, it is much more efficient than the permutation sampling, group testing and the all-$S$ influence, which compute the marginal contributions to a large number of subsets.

\begin{figure}[t]
\centering
\includegraphics[width=1.0\columnwidth]{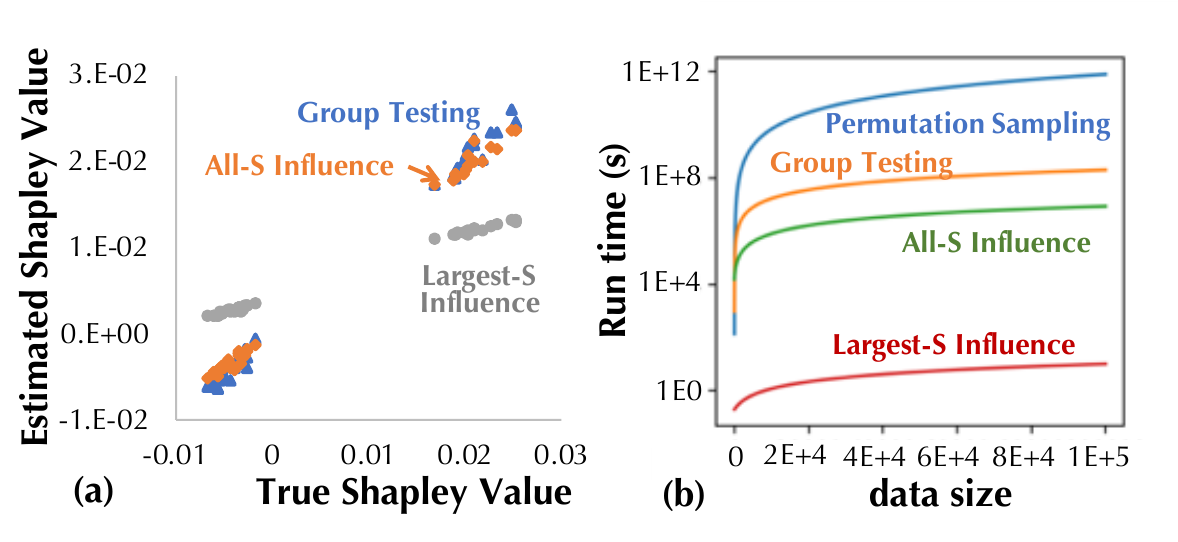}
\caption{Consider the SV approximation methods that do not rely on specific assumptions on the underlying learning algorithms and compare the (a) data values produced by them for training a logistic regression model and (b) their runtime.}
\label{fig:comp}
\end{figure}

\vspace{-0.2cm}
\paragraph{Approximation under sparsity assumptions.} When it is plausible to assume the SV of a training set is sparse, we could employ the idea of compressive sensing to recover the SV with fewer samples. Figure~\ref{fig:comp_sparse} compares the sample efficiency of the baseline permutation sampling and the compressive permutation sampling method on a size-$1000$ dataset sampled randomly from \texttt{MNIST}. For a given approximation error, the compressive permutation requires significantly fewer samples and model valuations than the baseline approach. The superiority of the compressive permutation becomes less evident at the large sample regime.

\begin{figure}[t]
\centering
\includegraphics[width=0.5\columnwidth]{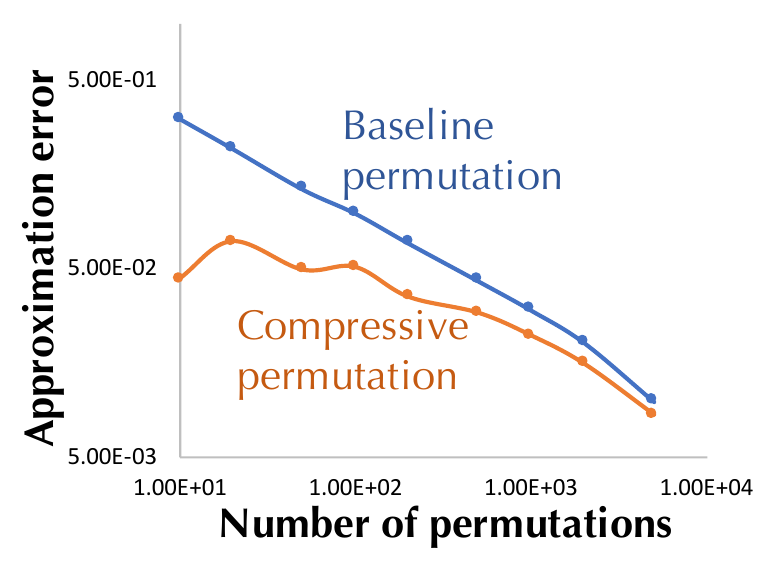}
\caption{Comparison of approximation errors with different numbers of permutations for the baseline permutation sampling and the compressive permutation sampling method.}
\label{fig:comp_sparse}
\end{figure}

\vspace{-0.2cm}
\paragraph{Stable learning algorithms.}\quad Our theoretical result in Section~\ref{sec:stability} shows that the SV of training data tends to be uniform for a stable learning algorithm, which has a small stability parameter $\beta$. We empirically validate this result by training a ridge regression on the \texttt{diabetes} dataset and varying the strength of its regularization term. In~\cite{bousquet2002stability}, it is shown that the stability parameter $\beta$ of the ridge regression $\min_\theta \frac{1}{N}\sum_{i=1}^N l(\theta, z_i) + \lambda \|\theta\|^2$ is proportional to $\sigma^2/\lambda$, where $\sigma$ is the Lipschitz constant of the loss function with respect to the model parameter $\theta$ and equal to $2|x_i^T\theta-y_i|\cdot|x_i|$. When the model fits the training data well, the change in $\sigma$ is small; therefore, applying more regularization leads to a more stable learning algorithm, which has lower variance in the training data values as illustrated in the shaded area of Figure~\ref{fig:stable_noise}. On the other hand, if the model no longer fits the data well due to excessive regularization, then $\sigma$ will dominate the stability parameter. In this case, since $\sigma$ increases with the regularization strength, $\beta$ and thereby the variance of the SV also increase. Note that the variance of the SV is identical to the approximation error of a uniform value division scheme.

\begin{figure}[t]
\centering
\includegraphics[width=\columnwidth]{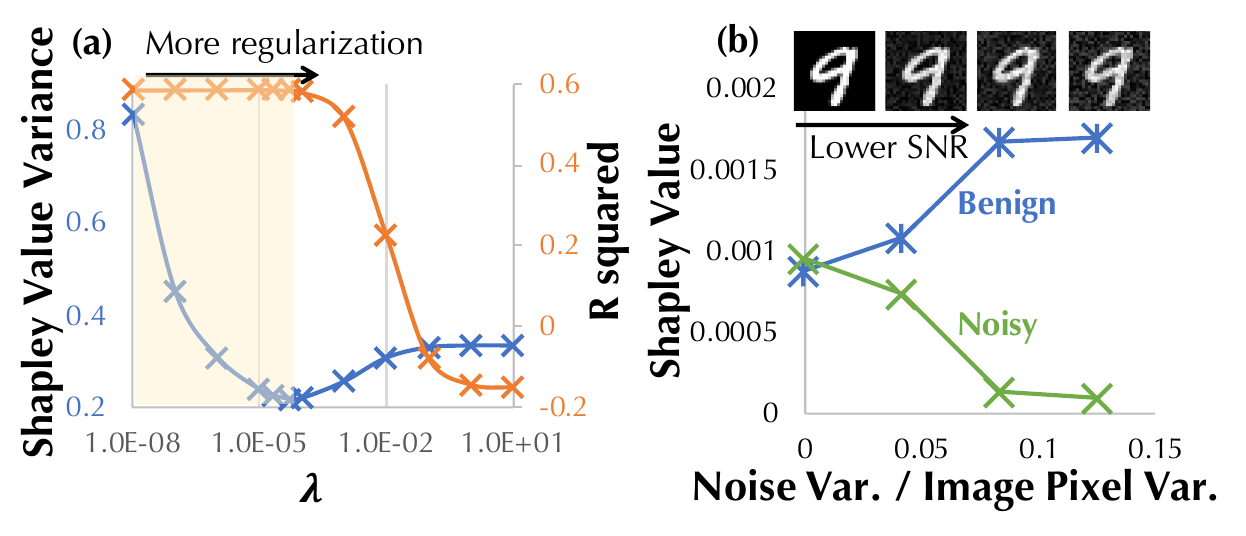}
\caption{(a) Variance of data values for a ridge regression with different regularization strength ($\lambda$). (b) Tradeoff between data value and privacy.}
\label{fig:stable_noise}
\end{figure}

\vspace{-0.2cm}
\paragraph{Value for Privacy-Preserving Data.}\quad
Differential privacy~\cite{dwork2008differential} has emerged as a standard privacy notation and is often achieved by adding noise that has a magnitude proportional to the desired privacy level. On the other hand, noise diminishes the usefulness of data and thereby degrades the value of data. We construct a training set using the \texttt{MNIST}, and divide the training dataset into two halves, one half containing normal images and the other half containing noisy ones. The testing accuracy on normal images is used as the utility function. Figure~\ref{fig:stable_noise}(b) illustrates a clear tradeoff between privacy and data value - the SV decreases as data becomes noisier.

\vspace{-0.2cm}
\paragraph{Value for Adversarial Examples.\quad} Mixing adversarial examples with benign examples in the training dataset, or adversarial training, is an effective method to improve the adversarial robustness of a model. In practice, we measure the robustness in terms of the testing accuracy on a dataset containing adversarial examples. We expect that the adversarial examples in the training dataset become more valuable as more adversarial examples are added into the testing dataset. Based on the \texttt{MNIST}, we construct a training dataset that contains both benign and adversarial examples and synthesize testing datasets with different adversarial-benign mixing ratios. Two popular attack algorithms, namely, Fast Gradient Sign Method (FGSM)~\cite{goodfellow2014explaining} and the Carlini and Wagner (CW) attack~\cite{carlini2017towards} are used to generate adversarial examples. Figure~\ref{fig:adv}(a, b) compares the average SV for adversarial examples and for benign examples in the training dataset. The negative testing loss for logistic regression is used as the utility function. We see that the SV of adversarial examples increases as the testing data becomes more adversarial and contrariwise for benign examples. This is consistent with our expectation. In addition, the adversarial examples in the training set are more valuable if they are generated from the same attack algorithm for testing adversarial examples.

\begin{figure}[t]
\centering
\includegraphics[width=1.0\columnwidth]{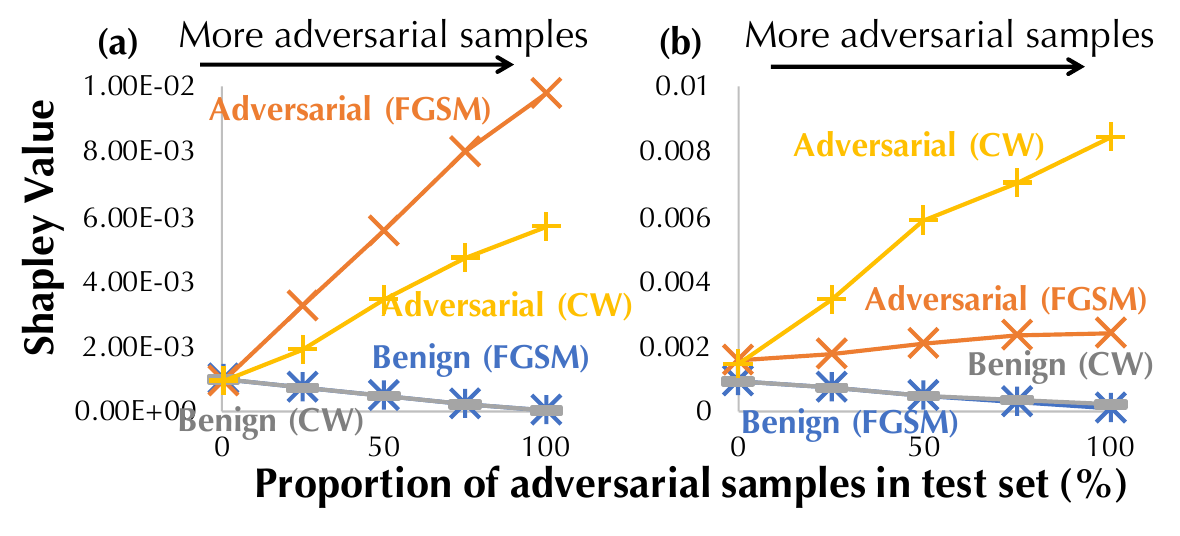}
\caption{(a, b) Comparison of SV of benign and adversarial examples. FGSM and CW are different attack algorithms used for generating adversarial examples in the testing dataset: (a) (resp. (b)) is trained on Benign + FGSM (resp. CW) adversarial examples.}
\label{fig:adv}
\end{figure}

\vspace{-0.2cm}

\section{Conclusion}

ML has opened up exciting opportunities to tackle a wide variety of problems; nevertheless, very few works have attempted to understand the value of data used for training models. A principled way of data valuation is the key to stimulating data exchange, enabling the development of more sophisticated and robust ML models. We adopt the SV, a classic concept from cooperative game theory, for data valuation. The SV has many unique properties appealing to data valuation. However, the lack of efficient methods to compute the SV has prevented it from being adopted in the past. We develop a repertoire of techniques for estimating the SV in different scenarios. 

For future work, We wish to continue exploring the connection between ML and game theory and develop efficient valuation methods for ML models. It is also critical to understand other concepts from cooperative game theory (e.g., stable coalition) in the context of data valuation. Last but not least, we hope to apply the techniques to real-world applications and revolutionize the way of data collection and dissemination.

\subsubsection*{Acknowledgements}
We would like to thank Tan Pin Lin and Feng Mingling for helping correct the complexity calculation of the group testing-based approximation algorithm and the assumption for proving the complexity of the compressive permutation sampling algorithm in the earlier version of the paper. We would also like to thank Jiachen T. Wang for enhancing the paper's readability and strengthening some proofs.

This work is supported in part by the Republic of Singapore’s National Research
Foundation through a grant to the Berkeley Education Alliance for Research in
Singapore (BEARS) for the Singapore-Berkeley Building Efficiency and
Sustainability in the Tropics (SinBerBEST) Program. This work is also supported in part by the CLTC (Center for Long-Term
Cybersecurity); FORCES (Foundations Of Resilient
CybEr-Physical Systems), which receives support from
the National Science Foundation (NSF award numbers
CNS-1238959, CNS-1238962, CNS-1239054, CNS1239166);
and the National Science Foundation under
Grant No. TWC-1518899. 
CZ and the DS3Lab gratefully acknowledge the support from Mercedes-Benz Research
\& Development NA, MeteoSwiss, Oracle Labs, Swiss Data Science Center, Swisscom, Zurich Insurance, Chinese Scholarship Council, and the Department of Computer Science at ETH Zurich.



\bibliography{aomsample}
\bibliographystyle{abbrv}

\appendix
\onecolumn
\section{Proof of Lemma 1}
\begin{Lm}
\label{lm:shapley_diff} 
For any $i,j\in I$ and $i\neq j$, the difference in Shapley values between $i$ and $j$ is 
\begin{align}
    s_i-s_j  = \frac{1}{N-1}  \sum_{S\subseteq I\setminus\{i,j\}} \frac{1}{\binom{N-2}{|S|}}\big[U(S\cup\{i\}) - U(S\cup \{j\})\big]\nonumber
\end{align}
\end{Lm}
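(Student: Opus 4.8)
The plan is to start from the combinatorial definition of the Shapley value, $s_i = \frac{1}{N}\sum_{S\subseteq I\setminus\{i\}} \binom{N-1}{|S|}^{-1}\big[U(S\cup\{i\}) - U(S)\big]$, and to manipulate the difference $s_i - s_j$ directly. The central observation is that in the sum defining $s_i$, every subset $S\subseteq I\setminus\{i\}$ either contains $j$ or does not, so I would partition the sum accordingly: the subsets avoiding $j$ range over $S\subseteq I\setminus\{i,j\}$, while those containing $j$ can be written uniquely as $S = T\cup\{j\}$ with $T\subseteq I\setminus\{i,j\}$, in which case $|S| = |T|+1$. I would perform the symmetric decomposition for $s_j$, swapping the roles of $i$ and $j$.

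First I would write out both decompositions and subtract. In the block where the subsets avoid both $i$ and $j$, the baseline terms $U(S)$ appearing in $s_i$ and $s_j$ cancel, leaving $\binom{N-1}{|S|}^{-1}\big[U(S\cup\{i\}) - U(S\cup\{j\})\big]$. In the block where the subsets contain the other index, the terms $U(T\cup\{i,j\})$ cancel, leaving $\binom{N-1}{|T|+1}^{-1}\big[U(T\cup\{i\}) - U(T\cup\{j\})\big]$. Renaming $T$ back to $S$, both contributions are then indexed by the same family $S\subseteq I\setminus\{i,j\}$ and carry the identical factor $U(S\cup\{i\}) - U(S\cup\{j\})$, so they merge into a single sum with coefficient $\frac{1}{N}\big[\binom{N-1}{|S|}^{-1} + \binom{N-1}{|S|+1}^{-1}\big]$.

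The final step, and the only place requiring genuine care, is the combinatorial identity
$$\frac{1}{N}\left[\frac{1}{\binom{N-1}{k}} + \frac{1}{\binom{N-1}{k+1}}\right] = \frac{1}{(N-1)\binom{N-2}{k}},$$
which converts the merged coefficient into the claimed form with $k = |S|$. I would verify this by writing each reciprocal binomial as a ratio of factorials and factoring $k!\,(N-2-k)!$ out of the numerator; the bracket then collapses via $(N-1-k) + (k+1) = N$, the factor $N$ cancels the leading $1/N$, and one is left with $\frac{k!\,(N-2-k)!}{(N-1)!} = \frac{1}{(N-1)\binom{N-2}{k}}$. Substituting this identity into the merged sum yields the stated formula. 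I expect the bookkeeping of the index shift in the second block of subsets — ensuring the binomial argument correctly becomes $|S|+1$ — to be the main place an error could creep in, rather than any conceptual difficulty.
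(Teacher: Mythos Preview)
Your proposal is correct and follows essentially the same approach as the paper: both split the sums for $s_i$ and $s_j$ according to whether the running subset contains the other index, observe that the $U(S)$ terms cancel in the first block and the $U(T\cup\{i,j\})$ terms cancel in the second, and then combine the two resulting coefficients via the same factorial identity (the paper writes it with $\frac{|S|!(N-|S|-1)!}{N!}+\frac{(|S|+1)!(N-|S|-2)!}{N!}$ rather than reciprocal binomials, but the computation is identical).
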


\begin{proof}
\begin{align}
 &s_i-s_j= \sum_{S\subseteq I\setminus\{i\}} \frac{|S|!(N-|S|-1)!}{N!} \big[U(S\cup \{i\})-U(S)\big] - \sum_{S\subseteq I\setminus\{j\}} \frac{|S|!(N-|S|-1)!}{N!} \big[U(S\cup \{j\})-U(S)\big]  \nonumber \\
 &=\sum_{S\subseteq I\setminus\{i,j\}} \frac{|S|!(N-|S|-1)!}{N!} \big[U(S\cup\{i\}) - U(S\cup \{j\})\big] +\sum_{S\in\{T|T\subseteq I,i\notin T, j\in T\}}  \frac{|S|!(N-|S|-1)!}{N!}   \big[U(S\cup \{i\})-U(S)\big]\nonumber \\
 &-\sum_{S\in\{T|T\subseteq I,i\in T, j\notin T\}} \frac{|S|!(N-|S|-1)!}{N!}\cdot \big[U(S\cup \{j\})-U(S)\big]\nonumber\\
 &=\sum_{S\subseteq I\setminus\{i,j\}} \frac{|S|!(N-|S|-1)!}{N!} \big[U(S\cup\{i\}) - U(S\cup \{j\})\big]\nonumber\\
 &+ \sum_{S'\subseteq I\setminus\{i,j\}} \frac{(|S'|+1)!(N-|S'|-2)!}{N!}\big[U(S'\cup\{i\}) - U(S'\cup \{j\})\big]\nonumber\\
 &= \sum_{S\subseteq I\setminus\{i,j\}} \big( \frac{|S|!(N-|S|-1)!}{N!} +\frac{(|S|+1)!(N-|S|-2)!}{N!}\big) \cdot \big[U(S\cup\{i\}) - U(S\cup \{j\})\big]\nonumber\\
 &=\frac{1}{N-1}  \sum_{S\subseteq I\setminus\{i,j\}} \frac{1}{C_{N-2}^{|S|}}\big[U(S\cup\{i\}) - U(S\cup \{j\})\big] \, .\nonumber
\end{align}
\end{proof}

Loosely speaking, the proof distinguishes subsets $S$ which include neither $i$ nor $j$ (such that the subset utility $U(S)$ of the marginal contribution directly cancels) and subsets including either $i$ or $j$. In the latter case, $S$ can be partitioned to a mock subset $S'$ by excluding the
respective point from S such that a common sum over $S'$ again eliminates all terms other than $U(S' \cup \{i\}) -  U(S' \cup \{j\})$.

\section{Proof of Lemma 2}
\begin{Lm}
\label{lm:feasibility}
Suppose that $C_{ij}$ is an $(\epsilon/(2\sqrt{N}),\delta/(N(N-1)))$-approximation to $s_i-s_j$. Then, the solution to the feasibility problem 
\begin{align}
\label{eqn:feasible_1}
    &\sum_{i=1}^N\hat{s}_i = U_\text{tot}\\
    \label{eqn:feasible_2}
    &|(\hat{s}_i-\hat{s}_j)-C_{i,j}|\leq \epsilon/(2\sqrt{N}) \quad \forall i,j\in \{1,\ldots,N\}
\end{align}
must exist, and any feasible solutions are an $(\epsilon,\delta)$-approximation to $s$ with respect to $l_2$-norm.
\end{Lm}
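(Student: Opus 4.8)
The plan is to condition on a single high-probability event on which every pairwise estimate $C_{ij}$ is accurate, reduce the claim to a deterministic bound on the error vector $e := \hat{s} - s$, and then control $\|e\|_2$ using the equality constraint \eqref{eqn:feasible_1} together with the pairwise bounds.

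First I would unpack the approximation hypothesis: for each ordered pair $i\neq j$, the event $\mathcal{E}_{ij} = \{\,|C_{ij}-(s_i-s_j)|\leq \epsilon/(2\sqrt{N})\,\}$ has probability at least $1-\delta/(N(N-1))$. Since there are exactly $N(N-1)$ ordered pairs, a union bound yields $\Pr\big[\bigcap_{i\neq j}\mathcal{E}_{ij}\big]\geq 1-\delta$. Everything that follows takes place on this event $\mathcal{E}=\bigcap_{i\neq j}\mathcal{E}_{ij}$, so it suffices to show that $\|\hat{s}-s\|_2\leq \epsilon$ holds deterministically whenever $\hat s$ is feasible and $\mathcal{E}$ occurs.

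On $\mathcal{E}$, the triangle inequality combines the accuracy of $C_{ij}$ with the feasibility constraint \eqref{eqn:feasible_2}:
\[
|(\hat{s}_i-\hat{s}_j)-(s_i-s_j)| \leq |(\hat{s}_i-\hat{s}_j)-C_{ij}| + |C_{ij}-(s_i-s_j)| \leq \frac{\epsilon}{2\sqrt{N}}+\frac{\epsilon}{2\sqrt{N}} = \frac{\epsilon}{\sqrt{N}}.
\]
Writing $e_i=\hat{s}_i-s_i$, this says $|e_i-e_j|\leq \epsilon/\sqrt{N}$ for all $i,j$. Moreover, the efficiency property of the Shapley value gives $\sum_i s_i = U_{\text{tot}}$, while \eqref{eqn:feasible_1} gives $\sum_i \hat{s}_i = U_{\text{tot}}$, so $\sum_i e_i = 0$.

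The crux is then a purely algebraic claim: if $\sum_i e_i=0$ and $|e_i-e_j|\leq \epsilon/\sqrt{N}$ for all $i,j$, then $\|e\|_2\leq \epsilon$. I would prove this by expanding the pairwise sum of squares,
\[
\sum_{i,j}(e_i-e_j)^2 = 2N\sum_i e_i^2 - 2\Big(\sum_i e_i\Big)^2 = 2N\,\|e\|_2^2,
\]
where the last step uses $\sum_i e_i=0$. Bounding the left-hand side by $N(N-1)\,(\epsilon/\sqrt{N})^2=(N-1)\epsilon^2< N\epsilon^2$ and dividing yields $\|e\|_2^2 < \epsilon^2/2$, hence $\|e\|_2\leq \epsilon/\sqrt{2}\leq \epsilon$. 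I expect the main obstacle to be precisely this final step: recognizing that the zero-sum constraint is exactly what converts a bound on pairwise \emph{differences} (which on their own only pin down the range of $e$) into a bound on the \emph{norm}, with the quadratic identity above being the clean device that exploits it. An alternative is to apply Popoviciu's inequality to the empirical variance of the coordinates of $e$, giving the slightly sharper $\|e\|_2\leq \epsilon/2$; either route closes the argument.
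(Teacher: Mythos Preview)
Your proof is correct, and it takes a genuinely different route from the paper's. Both arguments first pass to the good event (via a union bound over the $N(N-1)$ pairs), deduce $|e_i-e_j|\le \epsilon/\sqrt{N}$ by the triangle inequality, and invoke efficiency to get $\sum_i e_i=0$. The divergence is in the final step. The paper argues by contradiction at the level of the $\ell_\infty$ norm: if some $e_i>\epsilon/\sqrt{N}$, the pairwise bound forces every $e_j>0$, contradicting $\sum_j e_j=0$; hence $\|e\|_\infty\le \epsilon/\sqrt{N}$ and then $\|e\|_2\le\sqrt{N}\,\|e\|_\infty=\epsilon$. You instead exploit the zero-sum constraint directly through the quadratic identity $\sum_{i,j}(e_i-e_j)^2=2N\|e\|_2^2$, which is cleaner and yields the sharper constant $\|e\|_2\le \epsilon/\sqrt{2}$ (or $\epsilon/2$ via Popoviciu). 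The paper's detour through $\ell_\infty$ does give the extra information that each coordinate error is individually small, but for the stated $\ell_2$ conclusion your argument is more direct and loses less.
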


\begin{proof}
To see the existence of the feasible solution, the true Shapley value $\hat s_i = s_i$ is a feasible solution given the condition.

For the second part of the theorem, let $\epsilon'=\epsilon/(2\sqrt{N})$. Assume, for contradition, $\hat{s}_i-s_i> \epsilon/\sqrt{N}$. Let $\hat{s}_i-s_i= c\epsilon'$ where $c>2$. 
 
 Since $C_{i,j}$ is an $(\epsilon',\delta/(N(N-1)))$-approximation to $s_i-s_j$, we have that with probability at least $1- \delta/(N(N-1))$, 
 \begin{align}
     |(s_i-s_j)-C_{i,j}|\leq \epsilon'
 \end{align}
 
  Moreover, the inequality (\ref{eqn:feasible_2}) implies that 
\begin{align*}
    |(\hat{s}_i-\hat{s}_j)-C_{i,j}| \leq \epsilon'
\end{align*}
Therefore,
\begin{align}
   & |\hat{s}_i - s_i + s_j - \hat{s}_j| =|\hat{s}_i - \hat{s}_j - C_{i,j}- (s_i - s_j -C_{i,j})|\\
    &\leq |\hat{s}_i - \hat{s}_j - C_{i,j}| + |s_i - s_j -C_{i,j}|\\
    &\leq 2\epsilon'
\end{align}
with probability at least $1-\delta/(N(N-1))$. By the assumption that $\hat{s}_i-s_i = c\epsilon'$ and $c>2$, we have
\begin{align}
    (c-2)\epsilon'\leq \hat{s}_j - s_j\leq (c+2) \epsilon'
\end{align}
which further implies that $\hat{s}_j - s_j>0$ for some $j\neq i$. Thus, with probability $1-\delta/N$, we have $\hat{s}_j - s_j>0$ for all $j\neq i$. 

Then,
\begin{align}
    \sum_{j=1}^N (\hat{s}_j - s_j) = \sum_{j\neq i} (\hat{s}_j - s_j) + (\hat{s}_i - s_i) > 0
\end{align}
Since $\sum_{j=1}^N s_j = U_\text{tot}$, it follows that $\sum_{j=1}^N \hat{s}_j> U_\text{tot}$, which contradicts with the fact that $\hat{s}_j$ ($j=1,\ldots,N$) is a solution to the feasibility problem (\ref{eqn:feasible_1}) and (\ref{eqn:feasible_2}). 


The contradiction can be similarly established for $s_i-\hat{s}_i = c\epsilon'$. Therefore, we have that with probability at least $1-\delta/N$, $|s_i-\hat{s}_i|\leq 2\epsilon'$ for some $i$. This in turn implies that with probability at least $1-\delta$, $\|\hat{s}-s\|_\infty \leq 2\epsilon' = \epsilon/\sqrt{N}$. 
Moreover, since $\|\hat{s}-s\|_2\leq \sqrt{N}\|\hat{s}-s\|_\infty = \epsilon$, we have that $\|\hat{s}-s\|_2 \leq \epsilon$ with probability at least $1-\delta$.


\end{proof}

\section{Proof of Theorem 3}
\label{app:theorem_gt}
We prove Theorem 3, which specifies a lower bound on the number of tests needed for achieving a certain approximation error. Before delving into the proof, we first present a lemma that is useful for establishing the bound in Theorem~3.




\begin{Lm}[Bennett's inequality \cite{bennett1962probability}]
\label{lm:bennet}
Given independent zero-mean random variables $X_1,\cdots,X_n$ satisfying the condition $|X_i|\leq a$ for all $i$, let $\sigma^2=\sum_{i=1}^n\sigma_i^2$ be the total variance where $\sigma_i^2 = Var(X_i)$. Then for any $t \geq 0$,
\begin{align*}
    P[S_n>t] \leq \exp(-\frac{\sigma^2}{a^2}h(\frac{at}{\sigma^2}))
\end{align*}
where $h(u)=(1+u)\log(1+u) - u$ and $S_n = \sum_{i=1}^n X_i$.
\end{Lm}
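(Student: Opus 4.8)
The plan is to use the classical Cram\'er--Chernoff exponential-moment method. Writing $S_n = \sum_{i=1}^n X_i$, I would start from the observation that for every $\lambda > 0$ Markov's inequality applied to $e^{\lambda S_n}$ gives $P[S_n > t] \leq e^{-\lambda t}\,\E[e^{\lambda S_n}]$, and that by independence $\E[e^{\lambda S_n}] = \prod_{i=1}^n \E[e^{\lambda X_i}]$. The entire argument then reduces to controlling each single-variable factor $\E[e^{\lambda X_i}]$ and afterwards optimizing the resulting exponent over $\lambda$.

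The key step is a uniform bound on the moment generating function of one bounded, centered variable. Expanding the exponential and using $\E[X_i] = 0$, I would write $\E[e^{\lambda X_i}] = 1 + \sum_{k\geq 2} \lambda^k \E[X_i^k]/k!$. The boundedness $|X_i| \leq a$ yields the termwise estimate $\E[X_i^k] \leq \E[|X_i|^k] \leq a^{k-2}\E[X_i^2] = a^{k-2}\sigma_i^2$ for each $k \geq 2$, so the tail of the series is dominated by a rescaled exponential series:
\[
\E[e^{\lambda X_i}] \leq 1 + \frac{\sigma_i^2}{a^2}\sum_{k\geq 2}\frac{(\lambda a)^k}{k!} = 1 + \frac{\sigma_i^2}{a^2}\big(e^{\lambda a} - 1 - \lambda a\big).
\]
Applying $1 + x \leq e^x$ and taking the product over $i$, so that the per-coordinate variances aggregate into $\sigma^2$, gives $\E[e^{\lambda S_n}] \leq \exp\!\big(\frac{\sigma^2}{a^2}(e^{\lambda a} - 1 - \lambda a)\big)$, and hence
\[
P[S_n > t] \leq \exp\Big(-\lambda t + \frac{\sigma^2}{a^2}\big(e^{\lambda a} - 1 - \lambda a\big)\Big).
\]

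Finally I would optimize the exponent over $\lambda > 0$. Differentiating in $\lambda$ and setting the derivative to zero gives $e^{\lambda a} = 1 + at/\sigma^2$, i.e. the minimizer $\lambda^\star = \frac{1}{a}\log(1 + at/\sigma^2)$, which is positive precisely because $t \geq 0$. Substituting $\lambda^\star$ back, writing $u = at/\sigma^2$, and simplifying collapses the exponent into exactly $-\frac{\sigma^2}{a^2}\big[(1+u)\log(1+u) - u\big] = -\frac{\sigma^2}{a^2}h(u)$, which is the claimed bound.

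I expect the only genuinely delicate point to be the moment-generating-function estimate: justifying the termwise moment bound $\E[X_i^k] \leq a^{k-2}\sigma_i^2$ together with the interchange of expectation and summation. This is legitimate because $|X_i| \leq a$ makes the series absolutely convergent and dominated, but it is the one place where the boundedness hypothesis is essential. Everything afterward --- the product over independent factors, the $1+x \leq e^x$ step, and the single-variable optimization with its back-substitution --- is routine calculus with no hidden obstacles.
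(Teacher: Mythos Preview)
Your argument is the standard and correct Cram\'er--Chernoff proof of Bennett's inequality. Note, however, that the paper does not actually prove this lemma: it is stated with a citation to Bennett (1962) and used as a black box in the proof of Theorem~3, so there is no ``paper's own proof'' to compare against. Your derivation would serve perfectly well as a self-contained proof were one required.
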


We now restate Theorem~3 and proceed to the main proof.
\begin{customthm}{3}
Algorithm 1 returns an $(\epsilon,\delta)$-approximation to the Shapley value with respect to $l_2$-norm if the number of tests $T$ satisfies $T\geq 8 \log \frac{N(N-1)}{2\delta} /\big((1-q_{tot}^2)h\big(\frac{\epsilon}{Zr\sqrt{N}(1-q_{tot}^2)}\big)\big) $, where $q_{tot}=\frac{N-2}{N}q(1) + \sum_{k=2}^{N-1} q(k)[1+\frac{2k(k-N)}{N(N-1)}]$, $h(u) =(1+u)\log(1+u) - u$,
$Z=2\sum _{k=1}^{N-1} \frac{1}{k}$, and $r$ is the range of the utility function.
\end{customthm}

\begin{proof} 
By Lemma~\ref{lm:shapley_diff}, the difference in Shapley values between points $i$ and $j$ is given as
\begin{align*}
    &s_i-s_j = \frac{1}{N-1}  \sum_{S\subseteq I\setminus\{i,j\}} \frac{1}{C_{N-2}^{|S|}}\bigg[U(S\cup\{i\}) - U(S\cup \{j\})\bigg]\\
   & =\frac{1}{N-1} \sum_{k=0}^{N-2} \frac{1}{C_{N-2}^{k}} \sum_{S\subseteq I\setminus\{i,j\},|S|=k}\bigg[U(S\cup\{i\}) - U(S\cup \{j\})\bigg] \, .
\end{align*}

Let $\beta_1,\cdots,\beta_N$ denote $N$ Boolean random variables drawn with the following sampler:
\begin{enumerate}
\item Sample the ``length of the sequence'' $\sum_{i=1}^N \beta_i=k \in \{1, 2,\cdots,N-1\}$, with probability $q(k)$.
\item Uniformly sample a length-$k$ sequence from $N\choose k$ all possible length-$k$ sequences
\end{enumerate}
Then the probability of any given sequence $\beta_1,\cdots,\beta_N$ is
\begin{align*}
    P[\beta_1,\cdots,\beta_N] = \frac{q(\sum_{i=1}^N \beta_i)}{C_N^{\sum_{i=1}^N \beta_i}} \, . 
\end{align*}

Now, we consider any two data points $x_i$ and $x_j$ where $i,j\in I=\{1,\cdots,N\}$ and their associated Boolean variables $\beta_i$ and $\beta_j$, and analyze
\begin{align*}
    \Delta = \beta_i U(\beta_1,\cdots,\beta_N) - \beta_j U(\beta_1,\cdots,\beta_N)
\end{align*}
Consider the expectation of $\Delta$. Obviously, only $\beta_i \not = \beta_j$ has non-zero contributions:

\begin{align*}
    &\mathbb{E}[\Delta] = \sum_{k=0}^{N-2} \frac{q(k+1)}{C_N^{k+1}} \sum_{S\subseteq I\setminus\{i,j\},|S|=k} \big[U(\beta_1,\cdots,\beta_{i-1},1,\beta_{i+1},\cdots,\beta_{j-1},0,\beta_{j+1},\cdots,\beta_N)  \nonumber\\
    &\quad \quad \quad -U(\beta_1,\cdots,\beta_{i-1},0,\beta_{i+1},\cdots,\beta_{j-1},1,\beta_{j+1},\cdots,\beta_N)\big]\\
    &=\sum_{k=0}^{N-2} \frac{q(k+1)}{C_N^{k+1}}  \sum_{S\subseteq I\setminus\{i,j\},|S|=k} \big[U(S\cup \{i\}) - U(S\cup\{j\})\big]
\end{align*}

We would like to have $Z\mathbb{E}[\Delta] = s_i-s_j$
\begin{align*}
    Z\frac{q(k+1)}{C_N^{k+1}} = \frac{1}{(N-1)C_{N-2}^k}
\end{align*}
which yields
\begin{align*}
    q(k+1) = \frac{N}{Z(k+1)(N-k-1)}= \frac{1}{Z} (\frac{1}{k+1} + \frac{1}{N-k-1})
\end{align*}
for $k=0,\cdots,N-2$. Equivalently,
\begin{align*}
    q(k) = \frac{1}{Z}(\frac{1}{k} + \frac{1}{N-k})
\end{align*}
for $k=1,\cdots,N-1$. The value of $Z$ is given by
\begin{align*}
    Z& = \sum_{k=1}^{N-1}(\frac{1}{k} + \frac{1}{N-k})=2\sum_{k=1}^{N-1}\frac{1}{k}\leq 2 (\log(N-1) + 1)
\end{align*}
Now, $\mathbb{E}[Z\Delta]=s_i-s_j$. Assume that the utility function ranges from $[0, r]$; then, we know from (\ref{eqn:shapley_diff}) that $Z\Delta$ is random variable ranges in 
$[-Zr, Zr]$.




Note that $\Delta=0$ when $\beta_i=\beta_j$. If $P[\beta_i=\beta_j]$ is large, then the variance of $\Delta$ will be much smaller than its range.

\begin{align*}
    &P[\beta_i=\beta_j] = P[\beta_i=1,\beta_j=1] + P[\beta_i = 0, \beta_j = 0]\\
    &=\bigg[ \sum_{k=2}^{N-1} \frac{q(k)}{C_N^k}C_{N-2}^{k-2}\bigg] + \bigg[\frac{N-2}{N}q(1) + \sum_{k=2}^{N-1} \frac{q(k)}{C_N^k} C_{N-2}^k\bigg]\\
    &= \frac{N-2}{N}q(1) + \sum_{k=2}^{N-1} q(k)\bigg[1+\frac{2k(k-N)}{N(N-1)}\bigg] \equiv q_{tot}
\end{align*}

Let $W = \mathbbm{1}[\Delta\neq 0]$ be an indicator of whether or not $\Delta=0$. Then, $P[W=0] = q_{tot}$ and $P[W=1] = 1-q_{tot}$. 

Now, we analyze the variance of $\Delta$. By the law of total variance,
\begin{align*}
    \texttt{Var}[\Delta] = \E[\texttt{Var}[\Delta|W]] +\texttt{Var}[\E[\Delta|W]]
\end{align*}
Recall $\Delta\in [-r,r]$. Then, the first term can be bounded by
\begin{align*}
   & \E[\texttt{Var}[\Delta|W]]= P[W=0] \texttt{Var}[\Delta|W=0] + P[W=1] \texttt{Var}[\Delta|W=1]\\
    &= q_{tot}\texttt{Var}[\Delta|\Delta = 0] + (1-q_{tot}) \texttt{Var}[\Delta|\Delta\neq 0]\\
    &= (1-q_{tot}) \texttt{Var}[\Delta|\Delta\neq 0]\\
    &\leq (1-q_{tot}) r^2
\end{align*}
where the last inequality follows from the fact that if a random variable is in the range $[m,M]$, then its variance is bounded by $\frac{(M-m)^2}{4}$.

The second term can be expressed as
\begin{align}
    &\texttt{Var}[\E[\Delta|W]]= \E_W[(\E[\Delta|W] - \E[\Delta])^2]\nonumber\\
    &= P[W=0] (\E[\Delta|W=0] - \E[\Delta])^2 + P[W=1] (\E[\Delta|W=1] - \E[\Delta])^2\nonumber\\
    &= q_{tot} (\E[\Delta|\Delta = 0] - \E[\Delta])^2+ (1-q_{tot})(\E[\Delta|\Delta\neq 0] - \E[\Delta])^2\nonumber\\
    \label{eqn:var_of_exp}
    &=q_{tot} (\E[\Delta])^2 + (1-q_{tot})(\E[\Delta|\Delta\neq 0] - \E[\Delta])^2
\end{align}

Note that
\begin{align}
    \E[\Delta] &= P[W=0] \E[\Delta|\Delta=0] + P[W=1]\E[\Delta|\Delta\neq 0]\nonumber\\
    \label{eqn:expectation_decomp}
    &=(1-q_{tot}) \E[\Delta|\Delta\neq 0]
\end{align}
Plugging (\ref{eqn:expectation_decomp}) into (\ref{eqn:var_of_exp}), we obtain 
\begin{align*}
    &\texttt{Var}[\E[\Delta|W]] =(q_{tot}(1-q_{tot})^2  + q_{tot}^2 (1-q_{tot})) (\E[\Delta|\Delta\neq 0])^2
\end{align*}
Since $|\Delta|\leq r$, $(\E[\Delta|\Delta\neq 0])^2\leq r^2$. Therefore,
\begin{align*}
     \texttt{Var}[\E[\Delta|W]] \leq q_{tot}(1-q_{tot}) r^2
\end{align*}
It follows that
\begin{align*}
    \texttt{Var}[\Delta]\leq (1-q_{tot}^2)r^2
\end{align*}

Given $T$ samples, since $\Delta_t - \E[\Delta_t] \le 2r$, the application of Bennett's inequality in Lemma~\ref{lm:bennet} yields
\begin{align*}
    &P\bigg[\sum_{t=1}^T (Z\Delta_t - \E[Z\Delta_t])> \epsilon'\bigg] \leq \exp\bigg(-\frac{T(1-q_{tot}^2)}{4} h\big(\frac{2\epsilon'}{TZr(1-q_{tot}^2)}\big)\bigg)
\end{align*}
By letting $\epsilon = \epsilon'/T$,
\begin{align*}
    & P\big[(Z\bar{\Delta} - \E[Z\Delta])> \epsilon\big]\leq \exp\bigg(-\frac{T(1-q_{tot}^2)}{4}h\big(\frac{2\epsilon}{Zr(1-q_{tot}^2)}\big)\bigg)
\end{align*}
and 
\begin{align*}
    &  P\big[ | Z\bar{\Delta} - \E[Z\Delta] |> \epsilon\big] \leq 2 \exp\bigg(-\frac{T(1-q_{tot}^2)}{4}h\big(\frac{2\epsilon}{Zr(1-q_{tot}^2)}\big)\bigg) 
\end{align*}

Therefore, the number of tests $T$ we need in order to get an $(\epsilon/(2\sqrt{N}),\delta/(N(N-1)))$-approximation to the difference of two Shapley values for a single pair of data points is
\begin{align*}
    T \geq \frac{4}{(1-q_{tot}^2)h(\frac{\epsilon}{Z\sqrt{N}r(1-q_{tot}^2)})} \log \frac{2N(N-1)}{\delta}.
\end{align*}
And the statement above holds true for any pair of $i$ and $j$.
By Lemma~\ref{lm:feasibility}, we approximate the Shapley value up to $(\epsilon,\delta)$ with $(\epsilon/(2\sqrt{N}),\delta/(N(N-1)))$ approximations to all $N(N-1)/2$ pairs of data points.

\paragraph{Complexity calculation.} It can be shown that $q_\text{tot}=1 - \frac{2}{Z}$ and so 
\begin{align}
    Z(1 - q_\text{tot}^2) = Z(1-q_\text{tot})(1+q_\text{tot}) = 2(1+ q_\text{tot})\in [2,4]
\end{align}
Therefore, as $N\rightarrow \infty$,
\begin{align}
    \frac{\epsilon}{Zr\sqrt{N}(1-q_\text{tot}^2)}\rightarrow 0
\end{align}
The Taylor expansion of $h(u)$ centered at $0$ is $\frac{u^2}{2} + \cdots$. Thus, we have
\begin{align}
    &\frac{8}{(1-q_{tot}^2)h(\frac{\epsilon}{Z\sqrt{N}r(1-q_{tot}^2)})} \log \frac{N(N-1)}{2\delta}\\
    & = \mathcal{O}\bigg(\frac{\log N}{(1-q_{tot}^2)\frac{\epsilon^2}{Z^2Nr^2(1-q_{tot}^2)^2}}\bigg)\\
    & = \mathcal{O}(NZ^2(1-q_{tot}^2)\log N)\\
    & = \mathcal{O}(NZ \log N)
\end{align}
Since $Z \leq 2(\log (N-1) + 1)$, we have $\mathcal{O}(NZ \log N) = \mathcal{O}(N(\log N)^2)$.
\end{proof}



\section{Proof of Theorem 4}

\begin{customthm}{4}
\label{thm:compress_perm}
Suppose that $U(\cdot)$ is monotone. There exists some constant $C'$ such that if $M\geq C'(K\log(N/(2K))+\log(2/\delta))$ and $T\geq \frac{2r^2}{\epsilon^2} \log \frac{4M}{\delta}$, except for an event of probability no more than $\delta$, the output of Algorithm~\ref{alg:compressive_perm} obeys
\begin{align}
        \|\hat{s}-s\|_2 \leq C_{1,K}\epsilon + C_{2,K} \frac{\sigma_K(s)}{\sqrt{K}}
\end{align}
for some constants $C_{1,K}$ and $C_{2,K}$.
\end{customthm}

\begin{proof}
Due to the monotonicity of $U(\cdot)$, $\hat{y}_{m,t}$ can be lower bounded by $
  -\frac{1}{\sqrt{M}}\sum_{i=1}^N    U(P_i^{\pi_t}\cup \{i\}) -  U(P_i^{\pi_t}) = -\frac{1}{\sqrt{M}} U(\pi_t) \geq -\frac{r}{\sqrt{M}}$; the upper bound can be similarly analyzed. Thus, the range of $\hat{y}_{m,t}$ is $[-1/\sqrt{M}r, 1/\sqrt{M}r]$. Since $\E[\hat{y}_{m,t}] = \sum_{i=1}^N A_{m,i}\E[U(P_i^{\pi_t}\cup \{i\}) -  U(P_i^{\pi_t})] = \sum_{i=1}^N A_{m,i} s_i$ for all $m=1,\ldots,M$, an application of Hoeffiding's bound gives
\begin{align}
    &P[\|As-\bar{y}\|_2\geq \epsilon] \leq P[\|As-\bar{y}\|_\infty \geq \frac{\epsilon}{\sqrt{M}}]\\
    & \leq \sum_{m=1}^M P[|A_m s - \bar{y}_m|\geq\frac{\epsilon}{\sqrt{M}}]\\
    &\leq 2M \exp(-\frac{\epsilon^2T}{2r^2})
\end{align}
Let $s = \Delta s+ \bar{s}$. Thus, $P[\|A(\bar{s} + \Delta s)-\bar{y}\|_2\leq \epsilon]$ holds with probability at least $\delta/2$ provided
\begin{align}
\label{eqn:condition_T}
     T\geq \frac{2r^2}{\epsilon^2} \log \frac{4M}{\delta}.
\end{align}

By the random matrix theory, the restricted isometry constant of $A$ satisfies $\delta_{2K}\leq C_\delta=0.465$ with probability at least $1-\delta/2$ if
\begin{align}
\label{eqn:condition_M}
    M\geq CC_\delta^{-2} (2K\log(N/(2K))+\log(2/\delta))
\end{align}
where $C>0$ is a universal constant.

Applying the Theorem 2.7 in~\cite{rauhut2010compressive}, we obtain that the output of Algorithm 2 satisfies
\begin{align}
   \|\hat{s}-s\|= \|\Delta s^*- \Delta s\| \leq C_{1,K}\epsilon + C_{2,K} \frac{\sigma_K(s)}{\sqrt{K}}
\end{align}
with probability at least $1-\delta$ provided that (\ref{eqn:condition_T}) holds and $M\geq C'(K\log(N/(2K))+\log(2/\delta))$ for some constant $C'$.
\end{proof}

\section{Proof of Theorem 5}

For the proof of Theorem 5 we need the following definition of a \emph{stable utility function}.

\begin{Df}
A utility function $U(\cdot)$ is called $\lambda$-stable if 
\begin{align*}
   \max_{i,j\in I,S\subseteq I \setminus \{i,j\}} |U(S\cup \{i\}) - U(S\cup \{j\})| \leq \frac{\lambda}{|S|+1}
\end{align*}
\end{Df}
Then, Shapley values calculated from  $\lambda$-stable utility functions have the following property.
\begin{Prop}
\label{lm:shapley_diff_bound}
If $U(\cdot)$ is $\lambda$-stable, then for all $i,j\in I$ and $i\neq j$
\begin{align*}
    s_i - s_j \leq \frac{\lambda (1+\log (N-1))}{N-1} 
\end{align*}
\end{Prop}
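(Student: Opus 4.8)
The plan is to start directly from the closed-form expression for the Shapley-value difference established in Lemma~\ref{lm:shapley_diff}, namely
\begin{align*}
    s_i-s_j = \frac{1}{N-1}\sum_{S\subseteq I\setminus\{i,j\}} \frac{1}{C_{N-2}^{|S|}}\big[U(S\cup\{i\})-U(S\cup\{j\})\big],
\end{align*}
and to bound the summand termwise using the $\lambda$-stability hypothesis $|U(S\cup\{i\})-U(S\cup\{j\})|\le \lambda/(|S|+1)$. This immediately gives
\begin{align*}
    s_i-s_j \le \frac{\lambda}{N-1}\sum_{S\subseteq I\setminus\{i,j\}} \frac{1}{C_{N-2}^{|S|}(|S|+1)}.
\end{align*}

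The key simplification is to group the subsets $S\subseteq I\setminus\{i,j\}$ by their cardinality $k=|S|$. Since $|I\setminus\{i,j\}|=N-2$, there are exactly $C_{N-2}^{k}$ subsets of size $k$, and each contributes the same value $1/\big(C_{N-2}^{k}(k+1)\big)$. Therefore the binomial coefficients cancel:
\begin{align*}
    \sum_{S\subseteq I\setminus\{i,j\}} \frac{1}{C_{N-2}^{|S|}(|S|+1)} = \sum_{k=0}^{N-2} C_{N-2}^{k}\cdot\frac{1}{C_{N-2}^{k}(k+1)} = \sum_{k=0}^{N-2}\frac{1}{k+1}.
\end{align*}

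The remaining step is to recognize the resulting sum as a harmonic number, $\sum_{k=0}^{N-2}\frac{1}{k+1}=\sum_{m=1}^{N-1}\frac{1}{m}=H_{N-1}$, and to apply the standard bound $H_{N-1}\le 1+\log(N-1)$. Combining these pieces yields
\begin{align*}
    s_i-s_j \le \frac{\lambda}{N-1}\,H_{N-1} \le \frac{\lambda(1+\log(N-1))}{N-1},
\end{align*}
which is the claimed inequality.

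I do not expect any genuine obstacle here: the argument is a direct substitution followed by a counting collapse and a textbook harmonic-sum estimate. The only point requiring a little care is the cancellation of the binomial coefficients, which hinges on the observation that every subset of a fixed size $k$ carries an identical weight, so that the number of such subsets exactly cancels the $1/C_{N-2}^{k}$ factor; after that, the bound $H_{N-1}\le 1+\log(N-1)$ (the same estimate already used to bound $Z$ in the proof of Theorem~3) finishes the proof.
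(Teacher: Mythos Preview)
Your proposal is correct and follows essentially the same route as the paper: apply Lemma~\ref{lm:shapley_diff}, bound each summand via $\lambda$-stability, collapse the sum over subsets by grouping on $|S|=k$ so that the $C_{N-2}^{k}$ factors cancel, and finish with the harmonic bound $H_{N-1}\le 1+\log(N-1)$. If anything, you have spelled out the binomial-cancellation step more carefully than the paper does.
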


\begin{proof}
By Lemma~\ref{lm:shapley_diff}, we have
\begin{align*}
    &s_i-s_j \leq \frac{1}{N-1} \sum_{S\subseteq I\setminus \{i,j\}} \frac{1}{C_{N-2}^{|S|}} \frac{\lambda}{|S|+1}= \frac{1}{N-1} \sum_{|S|=0}^{N-2} \frac{\lambda}{|S|+1}
\end{align*}
Recall the bound on the harmonic sequences
\begin{align*}
    \sum_{k=1}^N \frac{1}{k} \leq 1 + \log (N)
\end{align*}
which gives us
\begin{align*}
    s_i-s_j \leq \frac{\lambda(1+\log(N-1))}{N-1}
\end{align*}
\end{proof}

Then, we can prove Theorem 5.

\begin{customthm}{5}
For a learning algorithm $A(\cdot)$ with uniform stability $\beta = \frac{C_\text{stab}}{|S|}$, where $|S|$ is the size of the training set
and $C_\text{stab}$ is some constant. Let the utility of $D$ be $U(D) = M - L_\text{test}(A(D),D_\text{test})$, where $L_\text{test}(A(D),D_\text{test}) = \frac{1}{N}\sum_{i=1}^N l(A(D),z_{\text{test},i})$ and $0\leq l(\cdot,\cdot)\leq M$. Then,
$s_i-s_j\leq 2C_\text{stab}\frac{1+\log(N-1)}{N-1}$ and the Shapley difference vanishes as $N\rightarrow \infty$. 
\end{customthm}


\begin{proof}
For any $i,j\in I$ and $i\neq j$,
\begin{align*}
 &|U(S\cup \{i\}) - U(S\cup \{j\})|\\
 & = |\frac{1}{N}\sum_{i=1}^N[l(A(S\cup\{i\}),z_{\text{test},i}) - l(A(S\cup\{j\}),z_{\text{test},i})]|\\
 &\leq \frac{1}{N}\sum_{i=1}^N | l(A(S\cup\{i\}),z_{\text{test},i}) - l(A(S),z_{\text{test},i})| +| l(A(S),z_{\text{test},i}) - l(A(S\cup\{j\}),z_{\text{test},i})| \\
 &\leq \frac{1}{N}\sum_{i=1}^N \frac{2C_\text{stab}}{|S|+1} = \frac{2C_\text{stab}}{|S|+1}
\end{align*}
Combining the above inequality with Proposition~\ref{lm:shapley_diff_bound} proves the theorem.
\end{proof}




\section{Proof of Theorem 6}
\begin{customthm}{6}
Consider the value attribution scheme that assign the value $\hat{s}(U,i) = C_U[U(S\cup \{i\})-U(S)]$ to user $i$ where $|S|=N-1$ and $C_U$ is a constant such that $\sum_{i=1}^N \hat{s}(U,i) = U(I)$. Consider two utility functions $U(\cdot)$ and $V(\cdot)$. Then, $\hat{s}(U+V,i)\neq\hat{s}(U,i) + \hat{s}(V,i)$ unless $V(I)[\sum_{i=1}^N U(S\cup\{i\})-U(S)] = U(I) [\sum_{i=1}^N V(S\cup\{i\})-V(S)]$.
\end{customthm}


\begin{proof}
Consider two utility functions $U(\cdot)$ and $V(\cdot)$. The values attributed to user $i$ under these two utility functions are given by
\begin{align*}
    \hat{s}(U,i) = C_U [U(S\cup\{i\})-U(S)]
\end{align*}
and
\begin{align*}
    \hat{s}(V,i) = C_V [V(S\cup\{i\})-V(S)]
\end{align*}
where $C_U$ and $C_V$ are constants such that $\sum_{i=1}^N\hat{s}(U,i) = U(I)$ and $\sum_{i=1}^N\hat{s}(V,i) = V(I)$. Now, we consider the value under the utility function $W(S) = U(S) +V(S)$:
\begin{align*}
    &\hat{s}(U+V,i)= C_W [U(S\cup\{i\})-U(S) + V(S\cup\{i\})-V(S)]
\end{align*}
where
\begin{align*}
    C_W = \frac{U(I)+V(I)}{\sum_{i=1}^N [U(S\cup\{i\})-U(S) + V(S\cup\{i\})-V(S)]}
\end{align*}
Then, $\hat{s}(U+V,i) =\hat{s}(U,i)+\hat{s}(V,i)$ if and only if $C_U=C_V=C_W$, which is equivalent to
\begin{align*}
  & V(I)[\sum_{i=1}^N U(S\cup\{i\})-U(S)]= U(I) [\sum_{i=1}^N V(S\cup\{i\})-V(S)] 
\end{align*}
\end{proof}

\section{Theoretical Results on the Baseline Permutation Sampling}

Let $\pi_t$ be a random permutation of $D=\{z_i\}_{i=1}^N$ and each permutation has a probability of $\frac{1}{N!}$. Let $\phi_i^t=U(P_i^{\pi_t}\cup\{i\})-U(P_i^{\pi_t})$, we consider the following estimator of $s_i$:
\begin{align*}
    \hat{s}_i = \frac{1}{T}\sum_{t=1}^T \phi_i^t
\end{align*}

\begin{Thm}
\label{thm:sampling_based}
Given the range of the utility function $r$, an error bound $\epsilon$, and a confidence $1-\delta$, the sample size required such that
\begin{align*}
    P[\|\hat{s}-s\|_2\geq \epsilon] \leq \delta
\end{align*}
is
\begin{align*}
    T\geq \frac{2r^2N}{\epsilon^2}\log \frac{2N}{\delta}
\end{align*}
\end{Thm}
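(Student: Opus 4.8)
The plan is to bound the $l_2$ error by first controlling the per-coordinate error and then summing over the $N$ coordinates, exactly as in the feasibility argument of Lemma~\ref{lm:feasibility} but applied directly to the estimator $\hat{s}_i = \frac{1}{T}\sum_{t=1}^T \phi_i^t$. The starting observation is that each $\phi_i^t = U(P_i^{\pi_t}\cup\{i\})-U(P_i^{\pi_t})$ is an \emph{unbiased} single-sample estimator of $s_i$: averaging the marginal contribution over a uniformly random permutation reproduces the permutation-form definition of the Shapley value, so $\E[\phi_i^t]=s_i$ and hence $\E[\hat{s}_i]=s_i$. Moreover, since the utility ranges over an interval of width $r$, each $\phi_i^t$ lies in an interval of width at most $r$, so the $\phi_i^t$ are bounded i.i.d.\ random variables across the $T$ independent permutations.

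First I would apply Hoeffding's inequality to the single coordinate $i$. Because $\phi_i^1,\dots,\phi_i^T$ are i.i.d., bounded in a range of length $r$, and have common mean $s_i$, Hoeffding gives
\begin{align*}
    P\big[|\hat{s}_i - s_i| \geq \tau\big] \leq 2\exp\Big(-\frac{2T\tau^2}{r^2}\Big)
\end{align*}
for any $\tau>0$. I would choose $\tau = \epsilon/\sqrt{N}$ so that the event $\{|\hat{s}_i - s_i| \geq \epsilon/\sqrt{N}\}$ has probability at most $2\exp(-2T\epsilon^2/(Nr^2))$ for each fixed $i$.

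Next I would convert the per-coordinate control into an $l_2$ bound. If $|\hat{s}_i-s_i| < \epsilon/\sqrt{N}$ holds for \emph{all} $i\in\{1,\dots,N\}$, then
\begin{align*}
    \|\hat{s}-s\|_2^2 = \sum_{i=1}^N (\hat{s}_i-s_i)^2 < N\cdot \frac{\epsilon^2}{N} = \epsilon^2,
\end{align*}
so $\|\hat{s}-s\|_2<\epsilon$. By the union bound, the complementary (bad) event has probability at most $N\cdot 2\exp(-2T\epsilon^2/(Nr^2))$. Requiring this to be at most $\delta$ gives $2N\exp(-2T\epsilon^2/(Nr^2))\leq\delta$, and solving for $T$ yields exactly
\begin{align*}
    T\geq \frac{r^2 N}{2\epsilon^2}\log\frac{2N}{\delta}.
\end{align*}
A slightly looser application of Hoeffding (using the range $[-r,r]$ of width $2r$ rather than $r$, giving the exponent $-T\epsilon^2/(2Nr^2)$) produces the stated constant $\frac{2r^2N}{\epsilon^2}$; I would adopt whichever range bound the paper uses for $\phi_i^t$ to match the target expression.

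The only genuinely substantive step is verifying unbiasedness, $\E[\phi_i^t]=s_i$, which rests on the classical identity that the Shapley value equals the expected marginal contribution of $i$ over a uniformly random ordering of the players; this follows by grouping permutations according to the predecessor set $P_i^{\pi_t}=S$ and counting that $S$ appears as the predecessor set with probability $\frac{|S|!(N-|S|-1)!}{N!}$, recovering the weights in the definition of $s_i$. Everything else is a routine combination of Hoeffding plus a union bound, so I expect no real obstacle once the bounded-range and unbiasedness facts are in place; the main care point is simply keeping the threshold $\tau=\epsilon/\sqrt{N}$ and the union-bound factor $N$ consistent so that the final $T$ matches the claimed $\frac{2r^2N}{\epsilon^2}\log\frac{2N}{\delta}$.
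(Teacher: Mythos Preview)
Your proposal is correct and follows essentially the same route as the paper: Hoeffding's inequality per coordinate, a union bound over the $N$ coordinates, and the inequality $\|\hat{s}-s\|_2\leq\sqrt{N}\|\hat{s}-s\|_\infty$ (your ``if all coordinates satisfy $|\hat s_i-s_i|<\epsilon/\sqrt N$'' step is exactly this). The paper indeed uses the range $[-r,r]$ for $\phi_i^t$, which is what produces the constant $\tfrac{2r^2N}{\epsilon^2}$ you flagged.
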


\begin{proof}
\begin{align*}
   & P[\max_{i=1,\cdots,N} |\hat{s}_i-s_i|\geq \epsilon]=P[\cup_{i=1,\cdots,N} \{|\hat{s}_i-s_i|\geq \epsilon\}]  \leq \sum_{i=1}^N P[|\hat{s}_i-s_i|\geq \epsilon]\\
  &\leq 2N\exp\bigg(-\frac{2T \epsilon^2}{4r^2}\bigg)
\end{align*}
The first inequality follows from the union bound and the second one is due to Hoeffding's inequality. Since $\|\hat{s}-s\|_2\leq \sqrt{N}\|\hat{s}-s\|_\infty$, we have
\begin{align*}
    &P[\|\hat{s}-s\|_2\geq \epsilon\leq P[\|\hat{s}-s\|_\infty \geq \epsilon/\sqrt{N}]\leq 2N\exp\bigg(-\frac{2T \epsilon^2}{4Nr^2}\bigg)
\end{align*}

Setting $2N\exp(-\frac{T \epsilon^2}{2Nr^2})\leq \delta$ yields
\begin{align*}
    T\geq \frac{2r^2N}{\epsilon^2} \log \frac{2N}{\delta}
\end{align*}
\end{proof}

The permutation sampling-based method used as baseline in the experimental part of this work was adapted from Maleki et al.~\cite{maleki2013bounding} and is presented in Algorithm~\ref{alg:sampling}.

\begin{algorithm}[ht]
\SetAlgoLined
\SetKwInOut{Input}{input}
\SetKwInOut{Output}{output}
\Input{Training set - $D = \{(x_i,y_i)\}_{i=1}^N$, utility function $U(\cdot)$, the number of measurements - $M$, the number of permutations - $T$}
\Output{The Shapley value of each training point - $\hat{s}\in \R^N$}
    \For{$t\gets 1$ \KwTo $T$}{
        $\pi_t \leftarrow \text{GenerateUniformRandomPermutation}(D)$\;
        $\phi^t_i \leftarrow  U(P_i^{\pi_t}\cup \{i\}) -  U(P_i^{\pi_t})$ for $i=1,\ldots,N$\;
    }
    $\hat{s}_i = \frac{1}{T}\sum_{t=1}^T \phi^t_i$ for $i=1,\ldots,N$\;
 \caption{Baseline: Permutation Sampling-Based Approach}
 \label{alg:sampling}
\end{algorithm}

\end{document}